\theoremstyle{plain}
\newtheorem{theorem}{Theorem}[section]
\newtheorem{proposition}[theorem]{Proposition}
\newtheorem{lemma}[theorem]{Lemma}
\theoremstyle{definition}
\theoremstyle{remark}
\newtheorem{remark}[theorem]{Remark}
\newtheorem*{theorem*}{Theorem}
\newtheorem*{proposition*}{Proposition}
\newcommand{\beq}{\vspace{0mm}\begin{equation}}
\newcommand{\eeq}{\vspace{0mm}\end{equation}}
\newcommand{\beqs}{\vspace{0mm}\begin{eqnarray}}
\newcommand{\eeqs}{\vspace{0mm}\end{eqnarray}}
\newcommand{\barr}{\begin{array}}
\newcommand{\earr}{\end{array}}
\newcommand{\cv}[0]{{\boldsymbol{c}}}
\newcommand{\xv}{\boldsymbol{x}}
\newcolumntype{L}[1]{>{\raggedright\let\newline\\\arraybackslash\hspace{0pt}}m{#1}}
\newcolumntype{C}[1]{>{\centering\let\newline\\\arraybackslash\hspace{0pt}}m{#1}}
\newcolumntype{R}[1]{>{\raggedleft\let\newline\\\arraybackslash\hspace{0pt}}m{#1}}
\newcommand{\E}{\mathbb{E}}
\renewcommand{\b}[1]{{\textbf{#1}}}
\newcommand{\bb}[1]{{\underline{#1}}}
\icmltitlerunning{
LIDAO: Towards Limited Interventions for Debiasing (Large) Language Models
}
\begin{document}

\twocolumn[
% \icmltitle{LIDAO: Towards Limited Interventions to Debias (Large) Language Models}
\icmltitle{
LIDAO: Towards Limited Interventions for Debiasing (Large) Language Models
}

% It is OKAY to include author information, even for blind
% submissions: the style file will automatically remove it for you
% unless you've provided the [accepted] option to the icml2024
% package.

% List of affiliations: The first argument should be a (short)
% identifier you will use later to specify author affiliations
% Academic affiliations should list Department, University, City, Region, Country
% Industry affiliations should list Company, City, Region, Country

% You can specify symbols, otherwise they are numbered in order.
% Ideally, you should not use this facility. Affiliations will be numbered
% in order of appearance and this is the preferred way.
% \icmlsetsymbol{equal}{*}

\begin{icmlauthorlist}
\icmlauthor{Tianci Liu}{1}
\icmlauthor{Haoyu Wang}{1}
\icmlauthor{Shiyang Wang}{1}
\icmlauthor{Yu Cheng}{2}
\icmlauthor{Jing Gao}{1}
\end{icmlauthorlist}

\icmlaffiliation{1}{Purdue University}
\icmlaffiliation{2}{The Chinese University of Hong Kong}
% \icmlaffiliation{sch}{School of ZZZ, Institute of WWW, Location, Country}

\icmlcorrespondingauthor{Yu Cheng}{chengyu@cse.cuhk.edu.hku}
\icmlcorrespondingauthor{Jing Gao}{jinggao@purdue.edu}

% You may provide any keywords that you
% find helpful for describing your paper; these are used to populate
% the "keywords" metadata in the PDF but will not be shown in the document
\icmlkeywords{Fairness, Language Models}

\vskip 0.3in
]

% this must go after the closing bracket ] following \twocolumn[ ...

% This command actually creates the footnote in the first column
% listing the affiliations and the copyright notice.
% The command takes one argument, which is text to display at the start of the footnote.
% The \icmlEqualContribution command is standard text for equal contribution.
% Remove it (just {}) if you do not need this facility.

\printAffiliationsAndNotice{}  % leave blank if no need to mention equal contribution
% \printAffiliationsAndNotice{\icmlEqualContribution} % otherwise use the standard text.

\begin{abstract}

\textit{\textbf{Warning}: this paper contains model outputs exhibiting offensiveness and biases.}

Large language models (LLMs) have achieved impressive performance on various natural language generation tasks.
Nonetheless, they suffer from generating negative and harmful contents that are biased against certain demographic groups (e.g., female), raising severe fairness concerns. 
As remedies, 
prior works intervened the generation by removing attitude or demographic information, 
inevitably degrading the generation quality and resulting in notable \textit{fairness-fluency} trade-offs. 
However, it is still under-explored to what extent the fluency \textit{has to} be affected in order to achieve a desired level of fairness. 
In this work, we conduct the first formal study from an information-theoretic perspective.  
We show that previous approaches are excessive for debiasing and propose LIDAO, a general framework to debias a (L)LM at a better fluency provably. 
We further robustify LIDAO in adversarial scenarios, where a carefully-crafted prompt may stimulate LLMs exhibiting instruction-following abilities to generate texts with fairness issue appears only when the prompt is also taken into account. 
Experiments on three LMs ranging from 0.7B to 7B parameters demonstrate the superiority of our method.
\end{abstract}

\section{Introduction}
\label{sec:intro}

% \textbf{Social bias in language model.}

Language models (LMs) parameterized by deep neural networks \citep{vaswani2017attention,lewis2019bart,radford2019language,brown2020language} have thrived in producing fluent and meaningful texts on a variety of natural language generation tasks \citep{see2019massively,ji2023survey}.
% on a variety of natural language processing tasks \citep{}. 
Recently, researchers further applied LMs to more diverse classification tasks by transforming them to generative frames \citep{raffel2020exploring}. These successes underscore the versatility of LMs, establishing them as the foundations for different natural language processing applications \citep{bommasani2021opportunities,zhou2023comprehensive}. 
In addition, with model sizes continually increasing, 
large language models (LLMs) have demonstrated unprecedented abilities to follow natural language instructions \citep{dong2022survey,ouyang2022training}. % to unseen tasks from different modalities. 
These abilities empower the zero-shot adaptation of LLMs to unseen tasks \citep{kojima2022large}, paving the way towards artificial general intelligence \citep{bubeck2023sparks}.

Notwithstanding, 
despite their remarkable performance, LMs suffer from the \textit{fairness} issue, i.e., 
they may generate negative texts that are biased against under-represented demographic groups (e.g., {female}) in our society \citep{sheng2019woman}. 
For instance, GPT-2 \citep{radford2019language} tends to generate more negative texts towards females \citep{huang2019reducing}. 
Such social biases, termed as \textit{global biases} \citep{sheng2020towards}, stem from the real world corpora due to historical reasons \citep{basta2019evaluating}. 
Unsurprisingly, LMs reproduce or amplify the biases from the data whereon they are trained \citep{gehman2020realtoxicityprompts,schick2021self}.

% \textbf{Debiasing literature}

% \textbf{Crucial issue: trade-off}

As remedies, a vast amount of bias mitigation approaches have been proposed.
Early attempts of fine-tuning with clean data proved effective on small LMs \citep{lu2020gender,saunders2020reducing,bender2021dangers}. However, they fall short on LLMs due to the prohibitive resource requirement for clean data curation and documentation \citep{schick2021self}. 
Recently, the lightweight {decoding-time intervention} paradigm, which directly modifies the text generation process in a LM, showed promising results and attracted great attention \citep{liu2021dexperts, yang2023unified}. 
Conceptually, these interventions imposed certain constraints to the generation process. 
For instance, in order to debias sentiment on different demographic groups, \citet{sheng2020towards} steered the LM to generate positive texts in all cases. 
Similarly, \citet{liang2021towards} sought to remove gender information during the text generation to avoid gender bias. 
Constraints as such, which we dub \textit{constant constraints}, hinder the LMs from generating sentiment- or gender-specific texts, inevitably resulting in degradation on the generation quality \citep{liu2021dexperts,fatemi2023improving}.

\textit{Fairness-fluency trade-off} refers to this phenomenon of increasing fairness at the cost of fluency, and has been widely observed in the literature \citep{liang2021towards}.
Noting this potential drawback, fluency plays a pivotal role in evaluating the practicability of different debiasing methods \citep{yang2023unified, wang2023toward}. 
However, no formal studies has been conducted on to what extent the fluency \textit{has to} be affected in order to achieve a desired level of fairness,
and it is still an open question \textit{whether existing methods can achieve the Pareto optimality, in the sense that no fluency improvement can be made without sacrificing the fairness. }

% \textbf{In this work}

In this paper, we take the first step and provide a negative answer through an information-theoretic analysis on mitigating the (global) bias in LMs. 
In particular, we quantify the bias in text generation with mutual information (MI) between some \textit{global property} and \textit{demographic group}. 
Here a global property can be any characteristic that may raise fairness concern such as \textit{sentiment} \citep{huang2019reducing} or \textit{regard} \citep{sheng2019woman}. 
Built upon this formulation, we theoretically show that the constant constraints in existing approaches are in fact excessive and can be weakened. 
Particularly, we formalize this intuition: \textit{Instead of intervening the generation of every word in a text, only words that may raise fairness issues need care.}
Steered by the theoretical analysis, we propose a principled framework named LIDAO for bias mitigation with limited interventions.
To be specific, in LIDAO, a word is allowed to be relevant with the (global) property or the demographic group, so long as it is independent with the other. Through this relaxation, LMs are allowed to generate property- or demographic group-specific content, thereby being able to produce more coherent and fluent texts.  
\textit{To our best knowledge, we provide the first formal study on debiasing LMs targetting at better fairness-fluency trade-offs. 
% This is one of the most significant contributions of our work.
}
Sec \ref{sec:method} details these results. 
% analysis and presents two variants to implement LIDAO in practice. 
% \tcco{Are we the first?}

% \textbf{Moreover, }
The proposed LIDAO provably removes any bias in the generated text. 
However, for LLMs that exhibit strong instruction-following (IF) abilities \citep{ouyang2022training}, 
a carefully-crafted adversarial prompt may instruct or stimulate them to generate a short text where the fairness issue appears \textit{only if} the context from the prompt is taken into account. As such, LIDAO may fall short to apply due to the absence of, for instance, the mention of any demographic group in the generation. 
To enhance the applicability of LIDAO in this adversarial scenario, 
we extend the bias formulation and bound it by two parts: a generation bias solvable with LIDAO, 
{and the extent to which one can infer the demographic group from the generation alone without seeing the prompts.}
We propose a heuristic to minimize the second component by utilizing a LM's IF ability as well. 
This design makes our solution highly lightweight and effective. 
Sec \ref{sec:adversarial} discusses this extension in details. 
% \tc{Contribution 2?}
% More importantly, our method only requires the knowledge of \textit{demographic group}, which is expected 

% In essence, we stimulate the LM to explicitly mention the referred demographic group in the context, through which LIDAO can be aware of 

 % the strong instruction following ability of LLMs also 
% we further extend our MI formulation 

% \textbf{Contribution: }

% \textbf{Organization: }

Experimental results in Sec \ref{sec:experiment} demonstrate the efficacy of our method in achieving better fairness-fluency trade-offs when debiasing small and large LMs on three representative tasks. 
In the remaining part of this paper, 
% is organized as follows. In Sec \ref{sec:method} we derive LIDAO to debias LMs with limited interventions through an information-theoretic analysis in details. 
% Sec \ref{sec:adversarial} strengthens LIDAO to adversarial prompt scenarios.
% In Sec \ref{sec:experiment} we experiment the proposed method with GPT-2 Large and two 7B LLMs on three representative debiasing tasks to verify its effectiveness. 
we review related works in Sec \ref{sec:background}, and conclude the paper in Sec \ref{sec:conclusion}.

\section{Proposed Method}
\label{sec:method}

Grounded in a theoretic analysis, 
we propose LIDAO to debias language models (LMs) with limited interventions, thereby achieving a better fairness-fluency tradeoff. 
LM preliminaries are also provided.

\subsection{Preliminaries}

Given a sentence %(tokens\footnote{Throughout this paper \textit{word} and \textit{token} are interchangeable.}) 
$\xv = (x_1, \dots, x_T)$ of length $T$ and each $x_t \in \mathcal V$ is a token from the vocabulary $\mathcal V$, 
a language model (LM) parameterzied by $\theta$ assigns probability $p_\theta(\xv)$ using the chain rule \citep{bengio2000neural} as follows:
\begin{align*}
    p_\theta(\xv) 
    &= \prod_{t=1}^T p_\theta (x_t \mid {x_1, \dots, x_{t-1}}) 
    \triangleq \prod_{t=1}^T p_\theta (x_t \mid \xv_{<t}),
\end{align*}
where $p_\theta(x_t \mid \xv_{<t} )$ is the predicted distribution of the next token $x_t$ given previous $\xv_{<t} \triangleq (x_1, \dots, x_{t-1})$. 
To generate $\xv$, the LM iteratively computes $p_\theta(x_t \mid \xv_{<t})$ and draws $x_t$ from it; then the sampled $x_t$ is fed back into the LM as part of the inputs for future steps. 
The generation completes if a pre-specified special token that marks the end of the sentence is returned, or the maximum length is reached. 
Different strategies to draw $x_t$ have also been proposed \citep{fan2018hierarchical,holtzman2019curious} to improve the overall fluency and diversity of generated $\xv$. We refer interested readers to \citep{zhao2023survey,min2023recent} and references therein for more detailed backgrounds of LMs. 
From now on, we use $\xv \sim p_\theta(\xv)$ to denote sentence $\xv$ generated by the LM where $p_\theta$ is the sampling distribution parameterized by the LM. 
We treat $\xv$ as a random variable.

\subsection{Debiasing Formulation}

% Let $\xv = (x_1, \dots, x_T)$ be a random variable denoting the sentence of length $T$ {generated} by a language model (LM). 
Given sentence $\xv \sim p_\theta(\xv)$, 
we further assume that the sentence $\xv$ contains some global property $g \triangleq g(\xv)$ and mentions some demographic group $a \triangleq a(\xv)$.
Representative global property includes sentiment \citep{huang2019reducing}, regard \citep{sheng2019woman}, and toxicity \citep{gehman2020realtoxicityprompts}.
We follow previous works \citep{yang2023unified, wang2023toward} and treat $a$ as another (global) property of $\xv$.
Built upon the definition of $g$ and $a$, 
an \textit{unbiased} LM should generate $g$ and $a$ independently.
% when it fails to do so, i.e., 
% % a LM is considered \textit{biased} if $g$ is dependent with $a$, 
% i.e., if $g$ and $a$ are associated with each other, 
% a fairness concern occurs. 
Mathematically, this concept resembles the \textit{demographic parity} fairness notion \citep{pedreshi2008discrimination, huang2019reducing}, whose violation can be quantified by a positive mutual information (MI) $I(g; a) > 0$ \citep{zemel2013learning}. 
To mitigate the bias in the LM, we alter $\xv \sim p_\theta(\xv)$ by solving
\begin{align}\label{eq:debias}
    \min\nolimits_\theta I(g(\xv); a(\xv)), \quad \xv \sim p_\theta (\xv). 
\end{align}

The aforementioned bias is known as \textit{global bias} \citep{sheng2020towards} and has been identified in various LMs \citep{huang2019reducing,ranaldi2023trip}.
As remedies, different interventions have been proposed \citep{liang2021towards,liu2021dexperts,yang2023unified}. 
% The aforementioned bias is known as \textbf{global bias} \citep{} and has been identified in various LMs \citep{}.
% As remedies, different bias intervention methods have been proposed to alter $\xv \sim p_\theta (\xv)$ by solving
% \begin{align}\label{eq:debias}
%     \min\nolimits_\theta I(g(\xv); a(\xv)), \quad \xv \sim p_\theta (\xv),
% \end{align}
% explicitly \citep{} or implicitly \citep{} \tc{as discussed in the related work}. % \tc{To be analyzed in the appendix? }
However, most existing machinery enforced the LMs to output constant $g$ or $a$ in lieu of solving Eqn \eqref{eq:debias} exactly. For instance, 
if $a(\xv) = b$ for all possible $\xv$ where $b$ is a constant, then $I(g(\xv); a(\xv)) = I(g(\xv); b) = 0$ holds and is minimized naturally by the non-negativity of MI \citep{cover1999elements}. 

Nonetheless, 
these 
constant constraints hinder a LM from producing property- or demographic group-specific texts, 
inevitably degrading its generation quality. 
As a consequence, a fairness-fluency trade-off occurs after debiasing \citep{liang2021towards}.
% constant constraints have been widely observed to restrict the ability of LMs to generate fluent texts, 
% resulting in the well-known fluency-fairness trade-offs \citep{}.
% \tc{such constant constraints may make the LM lose the ability to 
% generate texts with diverse sentiments ($g$) or concrete demographic groups.}
% As a consequence, a fluency-fairness trade-off is often observed from the debiased LM \citep{}.
% More importantly, 
This raises a crucial yet still open question: 

\textit{With elaborated tuning, are trade-offs from the constant constraints able to achieve the Pareto optimality%
\footnote{A Pareto optimal solution refers to a situation where neither fluency nor fairness can be improved without hurting the other. 
We ask if constant constraints are possible to achieve such optimality.}
% Or say, is it possible to increase the fluency without hurting the fairness?}%
?}

In this paper, we give a negative answer through a formal information-theoretic analysis.
Subsequently, a more adaptable debiasing method called LIDAO is derived.

\subsection{Limit Intervention for Debiasing Across its Options}

In this section, 
we show that the aforementioned constant requirement 
is an overkill for Eqn \eqref{eq:debias} by proving that a weaker condition is able to solve it.
Targeting on this weakened condition, we come up with a new solution called LIDAO
that is capable of achieving a better fairness-fluency trade-off. 
In particular,
below theorem gives a sufficient condition to minimize $I(g; a)$,
with its proof deferred to App \ref{app:proof:thm:lidao}.

\begin{theorem}\label{thm:lidao}
For sentence $\xv = (x_1, \dots, x_T)$ generated by a LM that involves some global property $g$ and demographic group $a$. If at every step $t > 1$, condition 
\begin{align}\label{eq:igx}
    \ell(g; t) &\triangleq I(g; x_t \mid \xv_{<t} ) = 0, \\
    \text{or}\quad \ell(a; t) &\triangleq I(a; x_t \mid \xv_{<t}) = 0 \label{eq:iax}
\end{align}
% or 
% \begin{align}\label{eq:iax}
%     \ell(a; t) \triangleq I(a; x_t \mid \xv_{<t}) = 0
% \end{align}
holds, then $I(g; a) = 0$. 
In words, if each $x_t$ is relevant to only $g$ or $a$, then $g$ and $a$ are independent with each other.
\end{theorem}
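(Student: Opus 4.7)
The plan is to prove the stronger statement that $g$ and $a$ are conditionally independent given every prefix $\xv_{\leq t}$ of the sentence, and then specialize to the shortest prefix to conclude $I(g; a) = 0$. Concretely, I would show by reverse induction on $t$ that
\[
p(g, a \mid \xv_{\leq t}) \;=\; p(g \mid \xv_{\leq t})\, p(a \mid \xv_{\leq t})
\]
holds for every prefix with positive probability under $p_\theta$. The base case $t = T$ is immediate: both $g(\xv)$ and $a(\xv)$ are deterministic functions of the full sentence, so the two conditional distributions are $\{0,1\}$-valued indicators whose product is exactly the joint indicator.

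For the inductive step, suppose the factorization holds at step $t$; I want to push it back to step $t-1$. Marginalizing $x_t$ against the LM's next-token law gives
\[
p(g, a \mid \xv_{<t}) \;=\; \sum_{x_t} p(x_t \mid \xv_{<t})\, p(g \mid \xv_{\leq t})\, p(a \mid \xv_{\leq t}),
\]
and the hypothesis at step $t$ (available because $t \geq 2$) lets me peel off one of the two factors. If $\ell(g; t) = 0$ then $p(g \mid \xv_{\leq t}) = p(g \mid \xv_{<t})$ does not depend on $x_t$, so it pulls outside the sum and the residual $\sum_{x_t} p(x_t \mid \xv_{<t})\, p(a \mid \xv_{\leq t})$ collapses to $p(a \mid \xv_{<t})$; the case $\ell(a; t) = 0$ is symmetric. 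Iterating from $t = T$ down to $t = 2$ yields $p(g, a \mid x_1) = p(g \mid x_1)\, p(a \mid x_1)$, which, under the standard convention that $x_1$ is a deterministic start-of-sequence token, unconditions to $p(g, a) = p(g)\, p(a)$ and hence $I(g; a) = 0$.

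The main conceptual move is choosing the right invariant to propagate. Attacking $I(g; a)$ directly through a chain-rule decomposition of mutual information along $\xv$ does not obviously cooperate with the per-step hypothesis, because at different $t$ one of the two conditions $\ell(g; t) = 0$ or $\ell(a; t) = 0$ may be active in an uncontrolled pattern and the conditioning sets on the two sides of the chain rule do not line up. Tracking the conditional-joint factorization $p(g, a \mid \xv_{\leq t})$ sidesteps this, since the assumption at step $t$ acts locally to preserve a product structure regardless of which of the two alternatives holds. Beyond identifying this invariant the argument is purely algebraic; the only minor subtlety is ensuring the reverse induction is carried out on prefixes of positive probability so that all conditionals are well defined.
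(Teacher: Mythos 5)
Your proof is correct, and it takes a genuinely different route from the paper's. The paper attacks $I(g;a)$ through an information-theoretic telescope: it first derives a lemma (App.~\ref{app:proof:thm:lidao}) showing that
$I(g; a \mid \xv_{<t+1}) - I(g; a \mid \xv_{<t}) = I(g; x_t \mid \xv_{<t}, a) - I(g; x_t \mid \xv_{<t})$,
then verifies that this increment vanishes under either of the two conditions, and finally chains the resulting equalities from $I(g;a\mid\xv_{<T+1})=0$ back to $I(g;a)$. You instead propagate the \emph{product factorization} $p(g, a \mid \xv_{\leq t}) = p(g \mid \xv_{\leq t})\,p(a \mid \xv_{\leq t})$ backward by direct marginalization over $x_t$, which is the same conditional independence statement recast as an algebraic identity. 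Your version is more elementary: it needs no mutual-information chain rule, no auxiliary lemma, and the per-step argument is a one-line pull-out-the-constant-factor manipulation. The paper's version makes the information-theoretic bookkeeping visible, which matches the rest of its exposition and makes the connection to the estimators $\hat\ell(g;t)$, $\hat\ell(a;t)$ more transparent. One small correction to your framing: you suggest that ``attacking $I(g;a)$ through a chain-rule decomposition \ldots does not obviously cooperate with the per-step hypothesis'' because the active condition may vary across $t$; in fact the paper's Lemma does cooperate cleanly, because the quantity $I(g,a; x_t\mid\xv_{<t}) - I(a; x_t\mid\xv_{<t}) - I(g; x_t\mid\xv_{<t})$ it reduces to is symmetric in $g$ and $a$, so either condition kills it at each step independently of which one was active before. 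Your invariant avoids having to notice this symmetry, which is a genuine simplification, but the obstacle you anticipated is not actually fatal to the MI route. Both proofs share the same mild indexing caveat at $t=1$ (the paper handles it via $\xv_{<1} = \texttt{<BOS>}$; you handle it via taking $x_1$ deterministic), so your treatment there is consistent with the theorem's ``$t>1$'' hypothesis.
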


% \begin{proof}
%     See App \ref{app:proof:thm:lidao}.
% \end{proof}

% We defer the complete proof to 

\textbf{Overkill of the constant constraints.}
% \textbf{Constant constraints are excessive.}
Thm \ref{thm:lidao} boils downs to that in order to minimize $I(g, a)$, each token $x_t$ needs to be independent of $a$ \textit{only if} it reflects $g$, and vice versa. 
This requirement is much weaker than the constant constraint, as the LM is allowed to generate property- or demographic group-specific texts. 
In contrast, a constant constraint such as $a(\xv) = b$ asks for independence between $a$ and \textit{all} tokens, regardless of whether they are relevant to $g$ or not.

% To relax the constant cosntraints, 
Following Thm \ref{thm:lidao}, 
we propose to 
\textbf{\underline{l}imit the 
\underline{i}ntervention for 
\underline{d}ebiasing 
\underline{a}cross its 
\underline{o}ptions} (LIDAO).
Conceptually, 
LIDAO \textit{chooses to solve} $\ell(g; t)$ or $\ell(a; t)$ adaptively, and we propose two variants to automate this process. 
Specifically, the 
\textbf{min-based LIDAO} seeks to solve the milder $\min \left(\ell(g; t), \ell(a; t)\right)$.
% \tc{Discuss the exact logic: reweight}
The \textbf{product-based LIDAO}, on the other hand, minimizes 
the product $\ell(g; t) \ell(a; t)$. 
The underlying rationale is that by 
taking gradient 
$\nabla \left( \ell(g; t) \ell(a; t)\right) = \ell(a; t) \nabla \ell(g; t) + \ell(g; t) \nabla \ell(a; t)$,
% \begin{align*}
% &\nabla \left( \ell(g; t) \ell(a; t)\right) \\
% =& \ell(a; t) \nabla \ell(g; t) + \ell(g; t) \nabla \ell(a; t),    
% \end{align*}
each loss is weighted by the magnitude of the other. 
Consequently, the smaller term is minimized more aggressively while the larger term tends to remain unchanged. 
Finally, 
we emphasize that LIDAO is a general framework and admits other designs for determining which loss to optimize.

\subsection{LIDAO Implementation}
\label{sec:method:implement}

LIDAO requires computing $\ell(g; t)$ and $\ell(a; t)$ by estimating terms $I(g; x_t \mid \xv_{<t})$ and $I(a; x_t \mid \xv_{<t})$, which proves to be a challenging task due to the lack of analytic form of MI and necessities some approximations thereof \citep{poole2019variational}.
In this work we follow UDDIA \citep{yang2023unified}, a state-of-the-art framework that seeks to minimize $\ell(g; t)$ and $\ell(a; t)$ simultaneously, for such approximate proxies.
% in a decoding-time manner. 
% \tc{debiasing steps}

\textbf{Approximate Proxies.}
\citet{yang2023unified} proved that the PPLM loss \citep{dathathri2019plug} is a valid proxy for $\ell(g; t)$ under certain condition and we adopt it as $\hat \ell(g; t)$.
For $\ell(a; t)$, \citet{yang2023unified} estimated it by 
\begin{align*}
    \hat \ell(a; t) 
    =&
    \E_{p_\theta(x_t \mid \xv_{<t})} \left[ D \left(q(a \mid \xv_{<t+1}) \| q(a \mid x_t)\right) \right] \notag\\
    &+ \E_{p(a)} \E_{p_\theta(x_t)} \left[ q (a \mid x_t) \log \frac{q(a \mid x_t)}{p(a)} \right], 
\end{align*}
where $p(a)$ is a pre-specified prior distribution over $a$. 
Here $q(a = k \mid x_t) \propto \cos(e(x_t), a_{k, v})$ where $e(x_t)$ is the embedding of $x_t$, 
$a_{k, v}$ is the first principle component of a set of manually-collected seed words $\mathcal{V}_{a=k}
$ from demographic group $a=k$, 
and $\cos(\cdot, \cdot)$ computes the cosine similarity between the two vectors. 
Similarly $q(a = k \mid \xv_{<t+1})$ is constructed by replacing $e(x_t)$ with $(\sum_{i=1}^t e(x_i)) / t$. 
Given these terms, $\hat \ell(a; t)$ can be computed analytically as all expectations are over the next-token distribution.

% \begin{align*}
%     \hat \ell(a; t) 
%     =&
%     \E_{p(a)} \E_{p_\theta(x_t)} \left[ p_\omega (a \mid x_t) \log \frac{p_\omega(a \mid x_t)}{p(a)} \right] \\
%     &+ \E_{p_\theta(x_t \mid \xv_{<t})} \left[ D \left(p_\omega(a \mid \xv_{<t+1}) \| p_\omega(a \mid x_t)\right) \right]
%     % & + 
%     % \sum_{a} \sum_{x_t} p_\theta(x_t)  +  
% \end{align*}

% For $\hat \ell(g; t)$, the authors proved that PPLM loss \citep{} is a valid proxy under certain conditions we adopt this choice. 
% For $\hat \ell(a; t)$, UDDIA estimates it
% \tc{UDDIA estimation}

\textbf{Adjusted min-based LIDAO.}
% Due to the use of different proxies, 
In execution,
the scales of $\hat \ell(g; t)$ and $\hat \ell(a; t)$ can differ significantly. 
As a consequence, directly plugging them into the min-based principle will make it degrade to choose the same option throughout the generation process. 
To solve this issue, we rescale each loss before determining the minimum. 
These rescaling factors are weighted pooling applied to the history losses. 
Namely, the factor $w(g; t)$ to rescale $\hat \ell(g; t)$ at step $t$ is 
\begin{align*}
    w(g; t) = \left( \sum_{j=1}^{t-1} \gamma^{j} \right)^{-1} \left( \sum_{j=1}^{t-1} \hat \ell(g; t-j) \gamma^j \right),
    % w(g; t) = \left( \sum_{j=1}^{t-1} \hat \ell(g; t-j) \gamma^j \right) / \left( \sum_{j=1}^{t-1} \gamma^{j} \right), 
    % w(g; t) = \frac{\left( \sum_{j=1}^{t-1} \hat \ell(g; t-j) \gamma^j \right)}{\left( \sum_{j=1}^{t-1} \gamma^{j} \right)}
\end{align*}
and factor $w(a; t)$ is defined similarly. 
Hyper-parameter $\gamma$ controls the speed of forgetting earlier histories
{and a smaller $\gamma$ forgets early histories faster.} In practice we set $\gamma = 0.5$.
Equipped with these factors, the min-based principle solves $\min(\hat \ell(g; t) / w(g; t), \hat \ell(a; t) / w(a; t))$ at step $t$. 

\textbf{Decoding-Time Intervention.}
LIDAO intervenes the generation as in \citet{yang2023unified} by applying bias-tuning \citep{zaken2021bitfit} to $\theta$ before generating $x_t$.
Denote the updated parameters at step $t$ by $\theta_t$ and corresponding distribution by $p_{\theta_t}$, we generate $x_t \sim p_\text{mix}$ by mixing the two distributions 
\begin{align}
% \label{eq:mix-prob}
    p_{\text{mix}}(x_t \mid \xv_{<t})  
    \propto \left( p_{\theta_t} (x_t \mid \xv_{<t}) \right)^\tau \left( p_{\theta} (x_t \mid \xv_{<t}) \right)^{1-\tau}, \notag
\end{align}
as in \citet{dathathri2019plug}, where $\tau$ is a hyper-parameter. After sampling $x_t$, the parameters in LM is reset to $\theta$. 

% original and updated distribution by $p_\theta$ and $p_{\theta'}$ respectively, 

% Conceptually, the only difference between LIDAO and UDDIA is the loss function to minimize. 
% tuning some bias-terms in the LM before generating 

% \tc{The }

\begin{remark}
In this work we adopt the estimators to minimize $\ell(g; t)$ and $\ell(a; t)$ from UDDIA given their good performance. However, LIDAO does not adhere to these choices and one can propose and derive their own methods.
Moreover, the main difference between LIDAO and UDDIA lies in that UDDIA is a constant constraint-based method which seeks to minimize $\ell(g; t)$ and $\ell(a; t)$ at every step, thereof does not offer the same flexibility of LIDAO.
\end{remark}

\section{Robustness against Adversarial Prompts}
\label{sec:adversarial}

As presented in Sec \ref{sec:method}, 
LIDAO
is capable of removing any undesired association between $g$ and demographic group $a$ that may raise fairness concerns in a text $\xv$ generated by the LM. 
Notwithstanding, given a carefully-designed adversarial prompt $\cv$, % that provides  context or instruction, 
LMs, especially those exhibit strong instruction-following abilities, 
are prone to follow $\cv$ and generate offensive $\xv$ \citep{wei2023jailbroken, wang2023decodingtrust}.
Moreover, when $\cv$ is already {concluding} and specific about some demographic group, $\xv$ may be short and solely consists $g$. %without mentioning $a$. 
As such, Eqn \eqref{eq:debias} holds even if $g$ in $\xv$ is systematically biased against the demographic group in $\cv$
due to the constant $a$. 

\citet{wei2023jailbroken}
showed that such threats are ubiquitous on large LMs (LLMs), necessitating the need of robustifying our debiasing formulation as well. 
% in Eqn \eqref{eq:debias} 
To this end, 
we note that the vulnerability stems from the underlying assumption that a biased $\xv$ should contain both $a$ and $g$ simultaneously. 
% Motivated by this observation,
We thereby relax this requirement and extend $a$ as a joint property of $\cv$ and $\xv$ to debias the LM as follows:
\begin{align}\label{eq:debias-ext}
    \min\nolimits_\theta I(g(\xv); a(\cv, \xv)), \quad \xv \sim p_\theta (\xv \mid \cv).
\end{align}
Unfortunately, LIDAO cannot solve Eqn \eqref{eq:debias-ext} provably
due to the additional dependency of $a$ on $\cv$ that is out of control.
Nonetheless, it is viable to quantify the violation as follows.
\begin{proposition}\label{thm:lidao-ext}
For sentence $\xv = (x_1, \dots, x_T)$ generated by a LM that is prompted by $\cv$, i.e., $\xv \sim p_\theta(\xv \mid \cv)$, then
\begin{align}
    I(g(\xv); a(\cv, \xv)) 
    \leq
    I(g(\xv); a(\xv)) \notag 
    &+ H(a(\cv, \xv) \mid a( \xv)), 
\end{align}
where $a(\cv, \xv)$ and $a(\xv)$ denote the referred demographic group in the joint of $(\cv, \xv)$ and $\xv$, respectively. 
\end{proposition}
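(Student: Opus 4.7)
The plan is to derive the inequality from two elementary information-theoretic facts: the chain rule for mutual information and the bound $I(X; Y \mid Z) \leq H(X \mid Z)$. To keep notation light, I would introduce the shorthand $G = g(\xv)$, $A = a(\xv)$, and $A' = a(\cv, \xv)$, so that the target reads $I(G; A') \leq I(G; A) + H(A' \mid A)$.

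First, I would introduce the pair $(A, A')$ as an auxiliary joint variable. Since adjoining coordinates to one side of a mutual information cannot decrease it, one has $I(G; A') \leq I(G; A, A')$; concretely, this follows from the expansion $I(G; A, A') = I(G; A') + I(G; A \mid A') \geq I(G; A')$ by non-negativity of conditional mutual information. Next, I would apply the chain rule in the opposite order to obtain the decomposition $I(G; A, A') = I(G; A) + I(G; A' \mid A)$, which already isolates the residual dependence that $A'$ carries beyond $A$.

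Second, I would upper-bound this residual term by a conditional entropy using the identity $I(G; A' \mid A) = H(A' \mid A) - H(A' \mid A, G) \leq H(A' \mid A)$, where the inequality uses only non-negativity of conditional entropy. Chaining the three displays yields $I(G; A') \leq I(G; A) + H(A' \mid A)$, which is the claim.

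I do not anticipate a genuine obstacle: the argument reduces to a short manipulation once one spots that the correct auxiliary variable to introduce is $A = a(\xv)$, i.e., the prompt-free version of the demographic attribute. The mildly delicate point is simply choosing to apply the chain rule in the order that puts $A$ first and $A'$ second, so that the leftover $I(G; A' \mid A)$ can be controlled by $H(A' \mid A)$. Notably, the bound makes no use of how $\xv$ is generated from $\cv$ nor of any Markov structure among $\cv, \xv, G, A, A'$, so the statement holds without further assumptions on $p_\theta$, which is important for the subsequent heuristic that targets the two right-hand-side terms separately.
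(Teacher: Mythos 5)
Your proof is correct. You take a genuinely different and more elementary route than the paper. The paper invokes the variation of information (VI) metric, $\mathrm{VI}(X,Y) = H(X) + H(Y) - 2I(X;Y)$, cites the fact that VI satisfies the triangle inequality (Kraskov et al., 2005), applies that triangle inequality to the triple $a(\cv,\xv)$, $a(\xv)$, $g(\xv)$, and then unwinds the algebra to arrive at the claimed bound. Your argument instead proceeds by bare hands: $I(G;A') \leq I(G;A,A') = I(G;A) + I(G;A' \mid A) \leq I(G;A) + H(A'\mid A)$, using only the chain rule and non-negativity of conditional mutual information and conditional entropy. This is strictly more self-contained — the VI triangle inequality itself is typically proved by exactly the chain-rule manipulations you carry out directly, so the paper's route imports an external lemma only to discard the metric structure at the end. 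Your version also makes the role of the residual term transparent: $H(a(\cv,\xv)\mid a(\xv))$ appears precisely as the bound on $I(G; A'\mid A)$, the dependence $A'$ carries beyond $A$, which is the quantity the paper's heuristic subsequently targets. Both proofs are fully general (no assumption on $p_\theta$ or any Markov structure), and your closing remark to that effect is correct and matches the paper's usage.
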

% 
% \begin{proof}
%     See App \ref{app:proof:thm:lidao-ext}
% \end{proof}
% 
% \begin{proof}
% The proof relies on the variation of information (VI, \citet{kraskov2005hierarchical}), a well-defined metric that satisfies the triangle inequality. 
% Specifically, for random variable $a(\cv, \xv)$ and $a(\xv)$, their VI is defined as 
% \begin{align}
%     VI(a(\cv, \xv), a(\xv)) 
%     \triangleq&
%     H(a(\cv, \xv)) + H(a(\xv))
%     - 2I(a(\cv, \xv); a(\xv)), \label{eq:vi}
% \end{align}
% where $H(\cdot)$ is the entropy. 
% Apply the triangle inequality to $a(\cv, \xv)$, $a(\xv)$, and $g(\xv)$, we have 
% \begin{align*}
%     VI(a(\xv), g(\xv)) 
%     \leq& 
%     VI(a(\cv, \xv), a(\xv)) 
%     + VI(a(\cv, \xv), g(\xv)).
% \end{align*}
% 
% Plug in Eqn \eqref{eq:vi} and reorganize the result, we have 
% \begin{align*}
%     -I(g(\xv); a(\xv))
%     \leq&
%     \left(H(a(\cv, \xv)) - I(a(\cv, \xv); a(\xv))\right) \\
%     &- I(a(\cv, \xv); g(\xv)) \\
%     I(a(\cv, \xv); g(\xv))
%     \leq& I(g(\xv); a(\xv)) 
%     + H(a(\cv, \xv) \mid a( \xv)).
% \end{align*}
% 
% This completes the proof. 
% \end{proof}
% 
The proof is deferred to App \ref{app:proof:thm:lidao-ext}. 
Conceptually,
Prop \ref{thm:lidao-ext} states that
$I(g(\xv); a(\cv, \xv))$ can be bounded from two parts: $I(g(\xv); a(\xv)) $ that can be minimized with LIDAO, 
and $H(a(\cv, \xv) \mid a(\xv))$ as a measure of how easy to infer $a(\cv, \xv)$ from $\xv$ without seeing $\cv$. 
To minimize the second term, we propose a heuristic to encourage $a(\xv) = a(\cv; \xv)$ and refer to this extended LIDAO as {eLIDAO}. 
% \tc{IFRW algorithm}

Inspired by \citet{schick2021self}, eLIDAO resorts to some reference LM that can generate $\xv$ with high affinity $a(\xv) = a(\cv; \xv)$ to guide generations.
Specifically, assume the reference LM predicts $p_{\text{ref}}(x_t \mid \cv, \xv_{<t})$, 
we sample $x_t$ from %the following mixed distribution 
\begin{align*}
    p_\text{mix}^{\text{ext}} (x_t \mid \cv, \xv_{<t}) \propto 
    \begin{cases}
    p_\text{ref} (x_t = v \mid \cv, \xv_{<t}) & \text{if $v \in \mathcal{V}_a$}; \\[1ex]
    p_\text{mix} (x_t = v \mid \cv, \xv_{<t}) & \text{otherwise}.
    \end{cases}
\end{align*}
Here $\mathcal{V}_a$ is a set of seed words whereby $a$ can be identified. 
For simplicity we adopt the ones to construct $\hat \ell(a; t)$ as $\mathcal{V}_a  = \cup_{k=1}^K \mathcal{V}_{a=k}$.
% $p_\text{mix}$ is given by Eqn \eqref{eq:mix-prob}.
All other implementations is same as in LIDAO. 
We summarize (e)LIDAO in Alg \ref{alg:lidao}.

\textbf{Design of the Reference Model.}
% Inspired by the success of the prompting, 
We utilize the instruction-following ability to let the untuned LM itself play the reference model. 
This design is extremely lightweight and requires no extra training or finetuning stage.
% and instruct it to focus on the demographic group if it presents in $\cv$. 
To minimize the unintentional influence on the reference model, we use a zero-shot instruction that only asks the LM to focus on the demographic group if it presents in $\cv$ without providing any example. We provide these instructions in App \ref{app:ifrw}. 
% Given each instruction, we replace \texttt{\#GENDER\#} with the referred gender in the prompt $\cv$ and prepend the instruction to the prompt. 
It is note worthy that no extensive prompt engineering is conducted, and the sample instructions
% (provided in App \ref{app:ifrw}) can be sub-optimal. 
% (provided in Table \ref{tab:instruction}) 
can be sub-optimal.

% \textbf{Choices of Reference Models.}
\begin{remark}
Reference models have been used to guide generations \cite{krause2021gedi, liu2021dexperts, schick2021self}. 
However, previous works often need to train the reference model from scratch \citep{krause2021gedi} or fine-tune it with carefully curated data \citep{liu2021dexperts}, which can be time and resource consuming. 
Recently, \citet{schick2021self} successfully conducted self-guiding based on a LM's internal knowledge of \textit{social bias},
but it is still an open question how much such a complicate knowledge exists in the LM and in which way it can be {effectively} and reliably acquired \citep{salinas2023not,cohen2023crawling}. 
In contrast, our solution only requires the knowledge of \textit{demographic groups}, which is expected easier considering that contextual word embedding are demographic-aware \citep{zhao2019gender,sweeney2019transparent}.
\end{remark}

\begin{algorithm}[!t]
\caption{(e)LIDAO algorithm}
\label{alg:lidao}
\begin{algorithmic}[1]

    \STATE \textbf{Input:} 
    Maximum length of generation $T$ (Sec \ref{sec:method}).
    Estimator $\hat \ell(g; t)$ and $\hat \ell(a; t)$ (Sec \ref{sec:method:implement}). 
    Mixing parameter $\tau$ and decaying parameter $\gamma$ (Sec \ref{sec:method:implement}).
    Language model parameterized by $\theta$; a small subset of $\theta$ of will be updated by bias-tuning (Sec \ref{sec:method:implement}). 
    Optional user-specified prompt $\cv$ that is possibly adversarial (Sec \ref{sec:adversarial}).
    \STATE \textbf{Additional input for eLIDAO: }
    Seed words $\mathcal{V}_a$ and eLIDAO system prompt $\cv_{e}$ (Sec \ref{sec:adversarial}).
    % $\Lambda>0$, generative model $G:\R^n\to \R^n$, $L>0$ (see Assumption \ref{asmp:L_smooth_of_F}), $\sigma>0,\delta>0$ (see Assumption~\ref{asmp:local}),         $C>0$ (see Assumption \ref{asmp:C_smooth_of_x}), precision $\varepsilon>0$
    \STATE \textbf{Initialize:} Set $\cv = \texttt{<BOS>}$ if not provided. 
    \FOR{$t=1, \dots, T$} 
        \STATE 
        Compute base $p_\theta(x_t \mid \cv, \xv_{<t})$ and reference distribution $p_\theta(x_t \mid \cv_e, \cv, \xv_{t})$. 
        \STATE 
        Compute $\hat \ell(g; t)$ and $\hat \ell(a; t)$ based on $p_\theta(x_t \mid \cv, \xv_{<t})$.
        \STATE 
        Compute the (e)LIDAO loss $\ell_t(\theta)$ by
        \begin{align*}
            \ell_t(\theta) = 
            \min(\hat \ell(g; t) / w(g; t), \hat \ell(a; t) / w(a; t)),
        \end{align*}
        where $w(g; t)$ and $w(a; t)$ are decaying weights as defined in Sec \ref{sec:method:implement}
        if min-based (e)LIDAO is applied, else prod-based (e)LIDAO uses 
        \begin{align*}
        \ell_t(\theta) = \hat \ell(g; t)  \hat \ell(a; t).
        \end{align*}
        \STATE 
        Apply bias-tuning to minimize $\ell_t(\theta)$ by updating $\theta$ with Adam optimizer.
        Obtain debiased parameters $\theta_t$. Compute updated distribution $p_{\theta_t}(x_t \mid \cv, \xv_{<t})$. 
        \STATE
        Compute unnormalized mixing distribution 
        \begin{align*}
            p_{\text{mix}}(x_t \mid \cv, \xv_{<t})  
            \propto& \left( p_{\theta_t} (x_t \mid \cv, \xv_{<t}) \right)^\tau \\
            &\quad \left( p_{\theta} (x_t \mid \cv, \xv_{<t}) \right)^{1-\tau}.
        \end{align*}
        \STATE 
        If using eLIDAO, for all words $v \in \mathcal V_a$, update
        \begin{align*}
            p_{\text{mix}}(x_t = v \mid \cv, \xv_{<t}) \leftarrow p_\theta(x_t = v \mid \cv_e, \cv, \xv_{t}). 
        \end{align*}
        \STATE
        Normalize $p_{\text{mix}}$ and sample token $x_t$. Reset $\theta$. 
        \IF{$x_t = \texttt{<EOS>}$}
        \STATE Break
        \ENDIF
    \ENDFOR 
    \OUTPUT Generated text $\xv = (\xv_1, \dots, \xv_T)$. 
\end{algorithmic}
\end{algorithm}

\section{Experiments}
\label{sec:experiment}

In this section we experiment the proposed (e)LIDAO with debiasing three LMs ranging from 0.7B to 7B parameters on three tasks. % to verify its effectiveness. 
As expected, (e)LIDAO achieved a much better empirical fairness-fluency trade-off, outperforming existing methods to a large extent. 
Meanwhile, all three LMs including LLMs exhibit substantial bias before intervention, highlighting the necessity for the mitigation \citep{wang2023decodingtrust, ranaldi2023trip}.

\subsection{Experiment Setup}

\textbf{Base Models.}
We experiment with three representative LMs across different model families and sizes. 
\textbf{GPT-2 Large} \citep{radford2019language} has been widely used as a test bed for debiasing methods \citep{dathathri2019plug, krause2021gedi,liu2021dexperts, yang2023unified}, 
and we follow this convention. 
\textbf{OPT} \citep{zhang2022opt} and \textbf{Falcon-7B-Instruct} \citep{falcon40b} are two 7B parameters LLMs. 
Notably, Falcon has undergone an instruction tuning. 
We term the three base models as GPT-2, OPT, and Falcon respectively for brevity.

\textbf{Tasks.}
We consider debiasing three global property $g$ that raise fairness concerns.

\setlist{nosep}
\begin{itemize}[noitemsep]
%    \vspace{-0.5em}
    \item 
    \textbf{Sentiment}: We debias each LM to generate sentences for different demographic groups with similar sentiment as in \citet{huang2019reducing, yang2023unified}. 

%    \vspace{-0.5em}
    \item
    \textbf{Regard}: \citet{sheng2019woman} noted that sentiment may not identify negative attitude against some demographic groups and proposed to capture such bias with the new notion \textit{regard}. 
    We debias each LM to output equivalent level of regards towards different demographic groups. 

%    \vspace{-0.5em}
    \item
    \textbf{Toxicity}: \citet{xu2021detoxifying} noted that detoxification, i.e., reducing toxicity in generated texts, often results in new bias. We follow \citet{yang2023unified} and minimize this bias. 
    
\end{itemize}

\textbf{Baselines.}
We compare the proposed LIDAO with four recent debiasing baselines.

\setlist{nosep}
\begin{itemize}[noitemsep]
%    \vspace{-0.5em}
    \item 
    \textbf{GeDi} \citep{krause2021gedi} 
    incorporates a reference class-conditioned LM to predict property $g$ for all possible next tokens based on the Bayes rule. 
    We adopted the pretrained GeDi released by the authors and recommended generation hyper-parameters directly. 

%    \vspace{-0.5em}
    \item 
    \textbf{PPLM} \citep{dathathri2019plug}
    uses the gradient from a lightweight classifier that predicts property $g$ from the LM's hidden representations to remove their associations. 
    We trained linear PPLM classifiers for each LM separately. 
    Due to the lack of regard-annotated dataset, we labeled SST-5 \citep{socher2013recursive} with the regard scorer \citep{sheng2020towards} to trained PPLM. 

%    \vspace{-0.5em}
    \item
    \textbf{DExperts} \citep{liu2021dexperts} 
    uses two reference LMs finetuned on data at different-leveled property $g$ to adjust the next-token probability.
    % After the adjustment, 
    % tokens favored by the so-called \textit{anti-expert} (e.g., the LM fine-tuned on the \textit{toxic} dataset) are down-weighted 
    % and tokens that received high probability on the \textit{expert} (e.g., the LM tine-tuned on the \textit{non-toxic} data) is up-weighted. 
    We adopted the pretrained DExperts released by the authors and recommended generation hyper-parameters directly. 

%    \vspace{-0.5em}
    \item
    \textbf{UDDIA} \citep{yang2023unified} 
    seeks to minimize $\ell(g; t)$ in Eqn \eqref{eq:igx} and $\ell(a; t)$ in Eqn \eqref{eq:iax} simultaneously 
    with $\hat \ell(g; t)$ and $\hat \ell(a; t)$ detailed in Sec \ref{sec:method:implement}.
    We dropped the \textit{redo} mechanism in UDDIA for simplification%
    \footnote{We found in our scenarios \textit{redo} often increases the running time with marginal improvement given proper learning rate.}.
    We tuned hyper-parameters for the used estimators on UDDIA and applied them to LIDAO.%
    \footnote{This setup put our proposed method in an unfavorable situation. Nevertheless, ours can still achieve comparable or better performance than that of UDDIA.}.
    % In this work we remove the \textit{redo} mechanism in the original \citep{yang2023unified} 

% \vspace{-0.5em}
\end{itemize}

GeDi and DExperts use pre-trained reference models that take GPT-2 as backbones, 
and cannot guide OPT and Falcon that use different tokenization methods without substantial reimplementation. 
Therefore, we only test GeDi and DExperts on debiasing GPT-2 as in the literature. 
Due to page limitations we defer more implementation details to App \ref{app:implementation}.

\textbf{Dataset.}
Following previous works \citep{liu2021dexperts, wang2023toward, yang2023unified}, we focus on the \textit{male} and \textit{female} gender groups.
We adopt the notion of \textit{gender polarity} \citep{dhamala2021bold, wang2023toward} 
and label $a$ as the majority group of gender words.
However, our framework is general and applies to other demographic groups such as \textit{race} as well. 
We focus on a set of paired adversarial prompts released by \citet{yang2023unified} that encourages \textit{toxic} and \textit{biased} texts because of high quality and challenging nature. 
This dataset consist of 175 pairs that are handcrafted from a 1K subset of the RealToxicityPrompts \citep{gehman2020realtoxicityprompts}. 
For each prompt, we generate 10 (GPT-2) or 5 (OPT and Falcon) continuations of maximum length 20, 
which constitute the final dataset we use for evaluation.

\textbf{Evaluation.}
The performance is evaluated from three aspects. 
First, we check how they can manipulate the global property by reporting the averaged $g$. 
This allows us to quantify how in general harmful generations can be avoided, which is pivotal for the evaluating toxicity. 
Second, we evaluate how the fluency is affected by checking the mean perplexity according to a pre-trained GPT-2 XL as in previous works \citep{pozzobon2023goodtriever,yang2023unified}. 
We report perplexity evaluated by larger models from the same family (i.e., OPT-13B and Falcon-40B) in App \ref{app:results}. 
Finally, biases are quantified in a task-specific manner as below to follow the convention in the literature \citep{huang2019reducing, gehman2020realtoxicityprompts, liu2021dexperts, yang2023unified}.
% Particularly, for \textbf{Sentiment}, the bias is the difference in \textit{the average sentiment score} on two gender groups;
% and the \textbf{Regard} bias is the \textit{total variation (TV) distance between the 3-way (negative, neutral, and positive) regard distributions} on two gender groups.
% Finally, for \textbf{Detoxification}, the bias is the difference in \textit{the maximum toxicity score} from Perspective API\footnote{\url{https://github.com/conversationai/perspectiveapi}. } on two gender groups as a worst-case measure. 

\setlist{nosep}
\begin{itemize}[noitemsep]
%    \vspace{-0.5em}
    \item 
    \textbf{Sentiment}: the bias is the difference in \textit{the average sentiment score} on two gender groups.

%    \vspace{-0.5em}
    \item
    \textbf{Regard}: the bias is the \textit{total variation (TV) distance between the 3-way (negative, neutral, and positive) regard distributions} on two gender groups. 

%    \vspace{-0.5em}
    \item
    \textbf{Toxicity}: the bias is the difference in \textit{the maximum toxicity score} from Perspective API\footnote{\url{https://github.com/conversationai/perspectiveapi}. } on two gender groups as a worst-case measure.
% \vspace{-0.5em}
\end{itemize}
For more comprehensive comparison, we analyze the performance with respect to generated $a(\xv)$ and joint $a(\cv, \xv)$ as detailed in Sec \ref{sec:exp:gen} and Sec \ref{sec:exp:joint} respectively. 
Human evaluation results and case studies are also provided.

% \subsection{Debias over Generated $a(\xv)$}
\subsection{Debias over Generated $a(\xv)$}
\label{sec:exp:gen}

% \tc{As formulated by Eqn \eqref{eq:debias},}
In this section
we assess the bias in sentiment, regard, and toxicity against gender group $a(\xv)$ in generated text $\xv$ and how it can be minimized on the three tasks.
Table \ref{tab:gen} reports these results. 
On the debiasing regard task, 
we combine harmless \textit{neutral} and \textit{positive} regard groups into \textit{non-negative} regard (NNRe) group for straightforward quantification of the averaged regard level. 
% It is worth noting that 
We note that 
\textit{on GPT-2 and OPT without instruct tuning, the min-based (e)LIDAO consistently performed better,
whereas Falcon that has undergone instruction tuning favored the prod-based (e)LIDAO.} 
Due to space limitation, Table \ref{tab:gen} reports only these favored results, remaining results are deferred to App \ref{app:results}.

% \subsubsection{Bias in LMs}

\textbf{Bias in LMs.} 
As per Table \ref{tab:gen}, all three LMs generations contain substantial bias. 
Notably, given the adversarial nature of the prompts, the two LLMs tend to produce more offensive (OPT) or biased (Falcon) texts compared to the smaller GPT-2. 
In specific,
OPT consistently produces the least friendly content, with the lowest sentiment and regard, or the highest toxicity scores. 
Simultaneously, its exhibits the highest bias in regard among the three LMs. 
Falcon, on the other hand, in spite of achieving the best scores, its biases in sentiment and toxicity are also significantly higher than the others.
Meanwhile, by examining the sentiment and regard scores, we observe that although Falcon attains the highest average sentiment score, its regard is the lowest. 
This finding aligns with \citet{sheng2019woman}, confirming that sentiment may not provide a comprehensive measure of the LM's attitude. % towards a gender group. 
In conclusion, these results highlight that 
\textit{LLMs, regardless of undergoing instruction tuning or not, are prone to generate texts as unfriendly, if not more so, as classical LMs in challenging scenarios,} 
emphasizing the need for additional bias mitigation \citep{wang2023decodingtrust, ranaldi2023trip}.

\textbf{Debiasing Performance.} 
According to Table \ref{tab:gen}, 
when debiasing the three LMs of different sizes, the proposed LIDAO demonstrated remarkable debiasing performance and reached the top two positions in seven out of nine scenarios.
At the same time, LIDAO generated the top two fluent texts in six scenarios based on the perplexity.
In the remaining two of three cases, its perplexity was slightly higher than that of eLIDAO, its own extended version. 
In contrast, the bias and fluency of UDDIA are both worse than LIDAO, despite their implementation similarity.
% In contrast, while PPLM occasionally attained the lowest bias, its perplexity surpassed that of all other methods. 
These empirical findings, coupled with the theoretical guarantee from Thm \ref{thm:lidao}, clearly underscore the significant success of LIDAO in advancing the fairness-fluency trade-off in debiasing LM by a considerable margin compared to existing approaches. 

Finally, we note that although DExperts guided GPT-2 to generate the most fluent texts, these texts exhibited the most negative sentiment and highest toxicity. This highlights the challenge of adapting reference models to new scenarios that diverge from the training corpora \citep{pozzobon2023goodtriever}.

\begin{table*}[htb!] 
\centering
\caption{
Performance of debiasing the three LMs (shadowed in different colors) with respect to generation $a(\xv)$. 
Bias terms are multiplied by 100 for better comparison. https://experience.graebel.com/login
The best debiased and fluent results are in \b{bold} and the second best results are \bb{underlined}.
GPT-2 and OPT used the min-based (e)LIDAO and Falcon used the prod-based (e)LIDAO. 
}
\resizebox{0.75\linewidth}{!}{%\textbf{}
\begin{tabular}{@{}l ccc c ccc c ccc@{}}
\toprule
                           & \multicolumn{3}{c}{Sentiment}                                      & & \multicolumn{3}{c}{Regard}                                          & & \multicolumn{3}{c}{Toxicity}\\ 
\cmidrule{2-4}   \cmidrule{6-8} \cmidrule{9-12}
                           
                           & Sent ($\uparrow$)  & Bias ($\downarrow$)   & PPL ($\downarrow$)    & & NNRe ($\uparrow$)   & Bias ($\downarrow$)   & PPL ($\downarrow$)    & & Toxi ($\downarrow$)    & Bias ($\downarrow$)   & PPL ($\downarrow$)\\ 
\midrule
\rowcolor[HTML]{F1F7FF}
GPT-2                      & 0.36$\pm$0.44      & 4.41                  & 22.17                 & & 0.71$\pm$0.55       & 3.89                  & 22.03                 & & 0.16$\pm$0.17          & 9.35                  & 22.17\\ 
\cdashline{1-12}\noalign{\vskip 0.5ex}
\rowcolor[HTML]{F1F7FF}
+GeDi                      & 0.64$\pm$0.44      & 4.99                  & 48.11                 & & -                   & -                     & -                     & & 0.10$\pm$0.12          & 6.81                  & 48.82\\
\rowcolor[HTML]{F1F7FF} 
+DExperts                  & 0.37$\pm$0.45      & 1.42                  & \b{16.00}             & & -                   & -                     & -                     & & 0.15$\pm$0.16          & 8.58                  & \b{16.00}\\
\rowcolor[HTML]{F1F7FF} 
+PPLM                      & 0.38$\pm$0.45      & 1.38                  & 34.10                 & & 0.73$\pm$0.55       & 8.74                  & 36.01                 & & 0.15$\pm$0.15          & \bb{6.24}             & \bb{26.85}\\
\rowcolor[HTML]{F1F7FF} 
+UDDIA                     & 0.41$\pm$0.45      & 2.77                  & 32.32                 & & 0.72$\pm$0.57       & 2.66                  & 34.24                 & & 0.14$\pm$0.15          & 11.74                 & 33.29\\ 
\cdashline{1-12}\noalign{\vskip 0.5ex}
\rowcolor[HTML]{F1F7FF} 
+LIDAO                     & 0.40$\pm$0.45      & \bb{0.32}             & \bb{27.49}            & & 0.69$\pm$0.55       & \bb{2.15}             & \b{28.02}             & & 0.14$\pm$0.15          & 9.88                  & 31.45\\
\rowcolor[HTML]{F1F7FF} 
+eLIDAO                    & 0.40$\pm$0.45      & \b{0.21}              & 28.26                 & & 0.72$\pm$0.55       & \b{1.53}              & \bb{28.08}            & & 0.14$\pm$0.15          & \b{5.91}              & {30.20}\\ 
\midrule

\rowcolor[HTML]{F1FFF1}
OPT                        & 0.33$\pm$0.43      & 0.98                  & 23.62                 & & 0.67$\pm$0.58       & 11.58                 & 23.63                 & & 0.19$\pm$0.20          & 0.78                  & 23.62\\ 
\cdashline{1-12}\noalign{\vskip 0.5ex}
\rowcolor[HTML]{F1FFF1} 
+PPLM                      & 0.40$\pm$0.46      & 4.80                  & 49.07                 & & 0.72$\pm$0.63       & 5.10                  & 46.86                 & & 0.17$\pm$0.18          & 2.27                  & \bb{36.69}\\
\rowcolor[HTML]{F1FFF1} 
+UDDIA                     & 0.39$\pm$0.45      & 5.40                  & 62.78                 & & 0.74$\pm$0.63       & \bb{1.81}             & 66.39                 & & 0.16$\pm$0.17          & \bb{1.69}             & 39.79\\ 
\cdashline{1-12}\noalign{\vskip 0.5ex}
\rowcolor[HTML]{F1FFF1} 
+LIDAO                     & 0.43$\pm$0.46      & \b{0.03}              & \bb{40.16}            & & 0.78$\pm$0.62       & \b{1.09}              & \bb{42.06}             & & 0.16$\pm$0.17         & \b{0.00}              & {37.17}\\
\rowcolor[HTML]{F1FFF1}
+eLIDAO                    & 0.39$\pm$0.45      & \bb{2.40}             & \b{38.69}             & & 0.71$\pm$0.66       & 3.91                  & \b{34.85}            & & 0.15$\pm$0.17           & 9.10                  & \b{35.70}\\ 
\midrule

\rowcolor[HTML]{FFF7EE}
Falcon                     & 0.41$\pm$0.46      & 6.39                  & 27.40                 & & 0.65$\pm$0.61       & 2.83                  & 27.00                 & & 0.15$\pm$0.16          & 12.64                 & 27.40\\ 
\cdashline{1-12}\noalign{\vskip 0.5ex}
\rowcolor[HTML]{FFF7EE} 
+PPLM                      & 0.46$\pm$0.46      & \b{2.12}              & 70.17                 & & 0.72$\pm$0.68       & 8.16                  & 71.51                 & & 0.13$\pm$0.14          & 21.79                 & 57.83\\
\rowcolor[HTML]{FFF7EE} 
+UDDIA                     & 0.47$\pm$0.46      & 4.54                  & \b{29.03}             & & 0.77$\pm$0.66       & \b{2.56}              & \bb{30.40}            & & 0.12$\pm$0.14          & 14.94                 & 29.26\\ 
\cdashline{1-12}\noalign{\vskip 0.5ex}
\rowcolor[HTML]{FFF7EE} 
+LIDAO                     & 0.41$\pm$0.46      & {2.96}                & 30.87                 & & 0.70$\pm$0.65       & \bb{2.59}             & \b{28.49}             & & 0.13$\pm$0.14          & \bb{10.16}            & \bb{28.01}\\
\rowcolor[HTML]{FFF7EE} 
+eLIDAO                    & 0.44$\pm$0.46      & \bb{2.74}             & \bb{30.51}            & & 0.75$\pm$0.53       & 7.61                  & 38.45                 & & 0.14$\pm$0.15          & \b{3.95}              & \b{26.92}\\
\bottomrule

\end{tabular}
}

\label{tab:gen}
%\vspace{-0.2in}
\end{table*}

\subsection{Debias over Joint $a(\cv; \xv)$}

\label{sec:exp:joint}

% \tc{As formulated by Eqn \eqref{eq:debias-ext},}
In this section,
we evaluate the bias with respect to gender group $a(\cv; \xv)$ by considering the generated text $\xv$ together with the prompt context $\cv$ and how different approaches can optimize such bias on three tasks. 
Results are presented in Table \ref{tab:joint}.
As previous we use the non-negative regard (NNRe) to verify how average regard is affected. 
On GPT-2 and OPT we report the min-based (e)LIDAO results and on Falcon we report the prod-based (e)LIDAO results, with remaining results deferred to App \ref{app:results}.

\textbf{Bias in LMs.}
As shown in Table \ref{tab:joint}, we note that all three LMs once again exhibit considerable bias against different gender groups.
Many previously discussed findings remain applicable here so we refrain from reiterating them for brevity and direct readers to the prior section. 
Notably, the bias in regard from OPT and the toxicity from Falcon are now lower, suggesting that certain texts $\xv$ that did not mention $a$ were not accounted for in the formulation in Eqn \eqref{eq:debias}. 
These distinctions highlight the significance of \textit{considering the influence of prompts when identifying and evaluating bias, particularly when dealing with LLMs.}

\textbf{Debiasing Performance.}
Referring to Table \ref{tab:joint}, while eLIDAO lacks the theoretical guarantee of LIDAO for debiasing, it demonstrated strong empirical success in mitigating bias on the three LMs. 
In contrast, LIDAO, lacking consideration for the prompt $\cv$, experienced a considerable decline in debiasing performance
as elucidated by Prop \ref{thm:lidao-ext} due to the presence of $H(a(\cv; \xv) \mid a(\xv))$. 
Compared with LIDAO, eLIDAO consistently achieved lower bias in all nine scenarios, and attained the best two performance in the six. 
In the meantime, eLIDAO maintained compelling, if not the best, fluency. Namely, it generated the top two fluent texts in eight out of nine scenarios after ignoring its original LIDAO version. 
Consequently, eLIDAO attained a notable fairness-fluency trade-off. 
These results collectively established the effectiveness of utilizing the LM itself as the reference model for the self-guidance, as proposed in Sec \ref{sec:adversarial}.

Additionally, DExperts again proved less effective in altering sentiment and toxicity in GPT-2 generations, despite being designed to \citep{liu2021dexperts}. 
We conjecture that this reflects how distribution shift can lead a reference model inaccurately captured the desired properties, resulting in the performance degradation. 
The proposed eLIDAO, on the contrary, only requires a reference model to identify \textit{gender polarity}, which is expected invariant in diverse contexts, making the resultant guidance more robust to domain shifts.

\begin{table*}[htb!] 
\centering
\caption{
Performance of debiasing the three LMs (shadowed in different colors) with respect to joint $a(\cv, \xv)$. 
Bias terms are multiplied by 100 for better comparison. 
The best debiased and fluent results are in \b{bold} and the second best results are \bb{underlined}.
GPT-2 and OPT used the min-based (e)LIDAO and Falcon used the prod-based (e)LIDAO. 
}
\label{tab:joint}
\resizebox{0.75\linewidth}{!}{%\textbf{}
\begin{tabular}{@{}l ccc c ccc c ccc@{}}
\toprule
                           & \multicolumn{3}{c}{Sentiment}                                      & & \multicolumn{3}{c}{Regard}                                          & & \multicolumn{3}{c}{Toxicity}\\ 
\cmidrule{2-4}   \cmidrule{6-8} \cmidrule{9-12}
                           
                           & Sent ($\uparrow$)  & Bias ($\downarrow$)   & PPL ($\downarrow$)    & & NNRe ($\uparrow$)   & Bias ($\downarrow$)   & PPL ($\downarrow$)    & & Toxi ($\downarrow$)     & Bias ($\downarrow$)   & PPL ($\downarrow$)\\ 
\midrule
\rowcolor[HTML]{F1F7FF}
GPT-2                      & 0.35$\pm$0.44      & 1.90                  & 23.20                 & & 0.69$\pm$0.56       & 3.66                  & 23.23                 & & 0.14$\pm$0.16           & 16.84                 & 23.20\\ 
\cdashline{1-12}\noalign{\vskip 0.5ex}
\rowcolor[HTML]{F1F7FF}
+GeDi                      & 0.64$\pm$0.44      & 2.21                  & 47.11                 & & -                   & -                     & -                     & & 0.08$\pm$0.11           & 7.12                  & 49.00\\
\rowcolor[HTML]{F1F7FF} 
+DExperts                  & 0.37$\pm$0.45      & \bb{0.68}             & \b{16.55}             & & -                   & -                     & -                     & & 0.14$\pm$0.16           & \b{1.00}              & \b{16.55}\\
\rowcolor[HTML]{F1F7FF} 
+PPLM                      & 0.37$\pm$0.45      & 1.10                  & 37.06                 & & 0.71$\pm$0.55       & 3.13                  & 38.30                 & & 0.13$\pm$0.15           & \bb{1.22}             & \bb{28.02}\\
\rowcolor[HTML]{F1F7FF} 
+UDDIA                     & 0.41$\pm$0.45      & 2.29                  & 33.85                 & & 0.72$\pm$0.56       & 3.46                  & 35.29                 & & 0.12$\pm$0.14           & 11.74                 & 33.43\\ 
\cdashline{1-12}\noalign{\vskip 0.5ex}
\rowcolor[HTML]{F1F7FF} 
+LIDAO                     & 0.38$\pm$0.45      & 2.11                  & \bb{28.43}            & & 0.73$\pm$0.54       & \bb{0.29}             & \bb{30.09}            & & 0.12$\pm$0.14           & 8.28                  & 31.27\\
\rowcolor[HTML]{F1F7FF} 
+eLIDAO                    & 0.38$\pm$0.45      & \b{0.46}              & 29.25                 & & 0.73$\pm$0.55       & \b{0.11}              & \b{29.79}             & & 0.12$\pm$0.14           & 5.91                  & {30.70}\\ 
\midrule

\rowcolor[HTML]{F1FFF1}
OPT                        & 0.34$\pm$0.44      & 1.04                  & 25.52                 & & 0.69$\pm$0.58       & 4.53                  & 25.42                 & & 0.19$\pm$0.20           & 0.78                  & 23.62\\ 
\cdashline{1-12}\noalign{\vskip 0.5ex}
\rowcolor[HTML]{F1FFF1} 
+PPLM                      & 0.40$\pm$0.46      & 4.44                  & 52.10                 & & 0.70$\pm$0.65       & 3.90                  & 47.59                 & & 0.16$\pm$0.18           & \b{1.78}              & \bb{39.50}\\
\rowcolor[HTML]{F1FFF1} 
+UDDIA                     & 0.40$\pm$0.45      & \bb{1.07}             & 63.83                 & & 0.70$\pm$0.61       & \bb{2.19}             & 68.34                 & & 0.15$\pm$0.17           & \bb{2.94}             & 42.64\\ 
\cdashline{1-12}\noalign{\vskip 0.5ex}
\rowcolor[HTML]{F1FFF1} 
+LIDAO                     & 0.40$\pm$0.45      & 2.96                  & \bb{43.31}            & & 0.74$\pm$0.62       & 9.09                  & \bb{45.97}            & & 0.14$\pm$0.17           & 6.29                  & 39.72\\
\rowcolor[HTML]{F1FFF1}
+eLIDAO                    & 0.39$\pm$0.45      & \b{0.83}              & \b{41.38}             & & 0.70$\pm$0.64       & \b{2.16}              & \b{36.50}             & & 0.14$\pm$0.17           & 4.80                  & \b{37.75}\\ 
\midrule

\rowcolor[HTML]{FFF7EE}
Falcon                     & 0.40$\pm$0.45      & 2.28                  & 29.44                 & & 0.70$\pm$0.63       & 1.60                  & 29.24                 & & 0.14$\pm$0.16           & 0.79                  & 29.44\\ 
\cdashline{1-12}\noalign{\vskip 0.5ex}
\rowcolor[HTML]{FFF7EE} 
+PPLM                      & 0.46$\pm$0.46      & \bb{1.41}             & 72.77                 & & 0.72$\pm$0.67       & 7.11                  & 74.72                 & & 0.11$\pm$0.13           & 19.38                 & 61.56\\
\rowcolor[HTML]{FFF7EE} 
+UDDIA                     & 0.45$\pm$0.46      & \b{0.81}              & \bb{31.79}            & & 0.75$\pm$0.66       & \b{1.90}              & \bb{32.68}            & & 0.11$\pm$0.13           & 3.97                  & 32.55\\ 
\cdashline{1-12}\noalign{\vskip 0.5ex}
\rowcolor[HTML]{FFF7EE} 
+LIDAO                     & 0.40$\pm$0.45      & 1.61                  & \b{31.67}             & & 0.73$\pm$0.67       & 3.10                  & \b{30.64}             & & 0.11$\pm$0.13           & \bb{1.30}             & \b{30.42}\\
\rowcolor[HTML]{FFF7EE} 
+eLIDAO                    & 0.43$\pm$0.46      & 1.57                  & 31.90                 & & 0.76$\pm$0.54       & \bb{2.43}             & 39.47                 & & 0.12$\pm$0.15           & \b{1.03}              & \bb{30.99}\\
\bottomrule

\end{tabular}
}

%\vspace{-0.2in}
\end{table*}

\subsection{Human Evaluation and Case Studies}

\begin{table*}[htb!] 
\centering
\caption{
Samples from debiasing sentiment (top), regard (middle), and toxicity (bottom) on three LLMs with UDDIA and (e)LIDAO as shadowed in different colors. 
All results are generated from the first prompt for simple illustrations. 
% We omit PPLM due to its significantly high perplexity. 
On the undebiased generations, we color the tokens that 
\textbf{should be} independent with {\color{red}{property $g$}} and {\color{blue}{gender $a$}} as suggested by LIDAO, respectively. 
}
\label{tab:samples}

\resizebox{0.75\linewidth}{!}{
\newcommand{\g}[1]{{\color{red}#1}}
\renewcommand{\a}[1]{{\color{blue}#1}}
\begin{tabular}{l}
\toprule
\textbf{Prompt:}
\textit{Instead, these men watched the man they say humiliated }
 \\
%  \midrule
% Tassk 1: Debiasing Sentiment.\\
\midrule

\textbf{[GPT-2]}

% [1, 1, 1, 1, 1, 0, 1, 1, 1, 0, 0, 0, 1, 1, 0, 1, 1, 1, 0, 1]
\a{them walk away with his}
\g{wallet} 
\a{-- and one}
\g{man was heckled}
\a{and screamed} 
\g{at}
\a{by the gang} 
\g{after}
...\\

\cdashline{1-1}\noalign{\vskip 0.5ex}

\rowcolor[HTML]{F1F7FF}
\textbf{[UDDIA]}
them on the national television. ``We lost our jobs after that interview,'' one recalls. ``We... \\

\cdashline{1-1}\noalign{\vskip 0.5ex}
\rowcolor[HTML]{F1F7FF}
\textbf{[LIDAO]}
them with his boldness and brash style of play. ``I watched him walk in...\\

\rowcolor[HTML]{F1F7FF}
\textbf{[eLIDAO]}
them, take them to task for their religious practices and then offer them assistance...\\
\midrule

\rowcolor[HTML]{F1FFF1}
\textbf{[OPT]}

\a{them with a}
\g{smile}
\a{and a}
\g{laugh}
\a{while taking their}
\g{own photos.} 
\a{Then when he saw something}...\\

\cdashline{1-1}\noalign{\vskip 0.5ex}

\rowcolor[HTML]{F1FFF1}

\textbf{[UDDIA]}
and tortured them die and medals of honor were to be later awarded for dogged service, this is\\

\cdashline{1-1}\noalign{\vskip 0.5ex}
\rowcolor[HTML]{F1FFF1}

\textbf{[LIDAO]}
and tortured them for years - a man who has been in and out of prison for more than 12...\\

\rowcolor[HTML]{F1FFF1}
\textbf{[eLIDAO]}
and tortured them for years - a man who has now been ordered to stand for his crimes at the...\\

\midrule

\rowcolor[HTML]{FFF7EE} 
\textbf{[Falcon]}

% [1, 0, 0, 1, 0, 1, 1, 1, 0, 1, 1, 1, 1, 1, 0, 0, 0, 0, 0, 0]
\a{the} %1
\g{black male} %00
\a{and} %1
\g{did} %0
\a{nothing. But} %111
\g{today} %0
\a{, that's} %1, 1, 1, 1, 1
\g{not so much the case}... \\

\cdashline{1-1}\noalign{\vskip 0.5ex}

\rowcolor[HTML]{FFF7EE} 
\textbf{[UDDIA]}
them, approached a woman and asked for sex - but after they say it was in self defense...\\

\cdashline{1-1}\noalign{\vskip 0.5ex}

\rowcolor[HTML]{FFF7EE} 
\textbf{[LIDAO]}
them, approached a woman and asked her if he could take a seat next to her in a crowded...\\

\rowcolor[HTML]{FFF7EE} 
\textbf{[eLIDAO]} 
them, approached a woman and asked her if she would like to join them on the dance floor...\\

\bottomrule
\end{tabular}
}

\end{table*}

We further conduct human evaluation and case studies on (e)LIDAO and compare them to the backbone UDDIA.
Due to the substantial human effort, each LM is evaluated on one task. 
% we focused on the two LLMs: OPT and Falcon, and assessed their performance in terms of regard (OPT) and detoxification (Falcon) tasks. 
We randomly sampled 20\% of prompts, and generated one continuation per prompt. % using different approaches. 
This process resulted in 420 pairs to compare in total. 
We requested two annotators to choose which sentience in each pair has better quality of \textit{fluency} and \textit{friendliness}. 
Results in Table \ref{tab:human} again affirm the effectiveness of (e)LIDAO.
% More results on GPT-2 is deferred to App \ref{app:results}. 

% We conclude this section with a case study on the generated text. 
% Table \ref{tab:samples} presents the original and debiased generations from three LMs. 
% Due to page limitation, we report results from only one LM for each task. 
% Examples clearly show that the proposed (e)LIDAO is capable of generating high quality texts on par with existing methods. 
% To gain a deeper understanding on how LIDAO seek to debias the generation through limited interventions, 
% we color-code each token in the original generation with blue and red to denote whether the property $g$ or gender $a$'s influence should be removed from it in order to debias. 
% Namely, we label tokens that achieved lower $\hat \ell(g; t)$ by blue and that achieved lower $\hat \ell(a; t)$ by red. 
% Importantly, the debiasing strategy employed by LIDAO is interpretable. 
% For instance, 
% in GPT-2's generation, LIDAO proposes to ensure that the mention of ``man was heckled'' should be independent of $g$, and ``screamed (at) by the gang'' should be independent of $a$, given the minimal existing dependencies. 
% Similarly, 
% in Falcon's generation, to eliminate the inherent bias, the generation of ``black male'' should not depend on $g$, and the action ``(did) nothing'' should be agnostic with $a$, as these dependencies are mild in nature. 

We conclude this section with a case study on the generated texts. 
Table \ref{tab:samples} presents the original and debiased generations from three LLMs, one for each task. 
% GPT-2 samples are shown to App \ref{app:results}.
Examples show that the proposed (e)LIDAO is capable of generating high quality texts on par with existing methods. 

To gain a deeper understanding on how LIDAO seeks to debias the generation through limited interventions, 
we color-code each token in the original generations with blue and red to denote whether the property $g$ or gender $a$'s influence \textit{should be} removed in order to debias. 
Namely, we label tokens that achieved lower $\hat \ell(g; t)$ by blue and that achieved lower $\hat \ell(a; t)$ by red. 
We found that the debiasing strategy employed by LIDAO enjoys a certain level of interpretability. 
For instance, 
in Falcon's generation, to eliminate the bias, LIDAO argues that the generation of ``black male'' should not depend on $g$, and the action ``(did) nothing'' should be agnostic with $a$, as these dependencies are mild in nature.

\begin{table}[htb!]
\centering
\caption{
Portions of human preferred generations. 
Each LM is evaluated on one task, respectively. Winning results are in bold.
% OPT and Falcon are evaluated on debiasing regard and detoxification task, respecitvely. 
}
% \vspace{-0.5em}
\label{tab:human}

\resizebox{0.95\linewidth}{!}{%\textbf{}
\renewcommand{\tabcolsep}{2pt}
\begin{tabular}{lccc c ccc }
\toprule
                                            & \multicolumn{7}{c}{Human Preferrence Portion ($\uparrow$)} \\
% \cmidrule{2-4} \cmidrule{6-8} 
\cmidrule{2-8}
% \midrule
                                            & LIDAO         & Tied  & UDDIA     & & eLIDAO      & Tied  & UDDIA\\
\cmidrule{2-4} \cmidrule{6-8} 

% \cmidrule{1-8}               
GPT-2 (Sent.)                               & \b{0.229}     & 0.571 & 0.2       & & \b{0.357}   & 0.3   & 0.343\\
% \cdashline{2-5}\noalign{\vskip 0.5ex}

OPT (Regard)                                & \b{0.357}     & 0.314 & 0.329     & & 0.314       & 0.371 & 0.314\\

Falcon (Detox.)                             & \b{0.3}       & 0.429 & 0.271     & & \b{0.286}   & 0.443 & 0.271\\
\bottomrule
\end{tabular}
}

\end{table}

\section{Related Works}
\label{sec:background}

\textbf{Bias in text generation} 
as a distribution-level difference across varied demographic groups has been widely observed in the literature \citep{gallegos2023bias}.
Existing measures of social bias falls into two categories. 
\textit{Local bias} \citep{liang2021towards} refers to the difference in the high-dimensional probability distributions of a LM and boils down to the accumulated difference in token-level distributions. 
\textit{Global bias} \citep{sheng2020towards} considers the difference in the distribution of some low-dimensional global property \citep{sheng2021societal}.
% Global bias is often measured with the difference in simple statistic such as average or maximum values \citep{}. 
In general, the practical harm of global bias is more straightforward than local bias
and has attracted more attention \citep{blodgett2020language}. 
In this work we study the global bias from an information-theoretic perspective.

\textbf{Bias mitigation}
has accumulated a vast literature \citep{li2023survey}. 
Early attempts seek to finetune a pretrained LM on some carefully-curated datasets that contains no or minimal bias \citep{lu2020gender, saunders2020reducing} or through a regularization training \citep{huang2019reducing,peng2020reducing,wang2023toward}. 
However, these methods are often resource-consuming, hindering their practicality for larger LMs. 
Consequently, the decoding-time intervention paradigm is favoured \citep{liu2021dexperts, yang2023unified}. 
Representative decoding-time interventions consist of using reference models to guide generations \citep{liang2020towards, krause2021gedi, liu2021dexperts}, or injecting adversarial triggers to stimulate unbiased generations \citep{sheng2020towards, schick2021self}. 
% Recently, \citet{yang2023unified} proposed a lightweight wa
However, many existing methods lack a formal debiasing formulation, making them largely unprincipled. 
Recent works \citep{yang2023unified, wang2023toward} incorporated mutual information (MI) in their bias mitigation formulation,  
but they only utilized MI to characterize the dependency between the global properties and generated tokens. 
In this work, we propose to formulate the global bias with MI directly. 
Notably, under the proposed formulation, existing methods can be seen as imposing some constant constraints on the LM, steering it to generate texts where every token is agnostic to either the demographic group or the global property, regardless of the necessity to do so.
Taking insights from this observation, we establish a principled way to debias the LM under a sufficient but weaker constraint.
Compared with existing methods, our solution is capable of achieving a better fairness-fluency trade-off both theoretically and empirically.

\section{Conclusion and Limitations}
\label{sec:conclusion}

We propose LIDAO towards a better fairness-fluency trade-off in debiasing text generation with language models (LMs). 
Through a theoretical analysis, we reveal that existing methods imposed excessive constraints to the LMs during the interventions, hindering them from achieving the optimal trade-offs. 
On the contrary, LIDAO debiases the LMs with only limited interventions, thereby achieves a better trade-off both theoretically and empirically.
We further extend LIDAO to adversarial scenarios, where an adversarial prompt may stimulate large language models (LLMs) to generate harmful text that is biased only if it is situated in the context from the prompt. 
Relying on the instruction-following ability of LLMs, a lightweight heuristic is proposed to make the potential bias explicit within the generated text, whereby LIDAO can be applied for mitigation. 
Throughout extensive experiments, the proposed method outperforms representative baselines in mitigating bias while maintaining high fluency by a large margin.  

% \section{Limitations}

One limitation of our work is that LIDAO only reduces, but does not eliminate the bias,
which we presume accounts for the inexact proxies. 
As in prior works, we rely on some mutual information proxies to optimize; 
and the referred gender in a text is also defined on a word frequency base. 
All these approximations can be inaccurate and result in the imperfect results.
We leave exploring more effective mutual information proxies and ways to determine the discriminated demographic groups as our future direction. 
Meanwhile, the evaluation of fluency and global property: sentiment, regard, and toxicity, are algorithmic-based (e.g., with a sentiment-analysis tool and Perspective API). 
However, these definitions can be context-specific and subjective, and the current evaluation paradigm is far from ideal, and it is crucial to develop a better one. 
Second, this work focuses on the gender bias because of its prominent importance in the literature. Due to the lack of off-the-shelf benchmark dataset that contains high quality adversarial prompts to encourage toxic and biased generations against different race and aged groups, we plan to conduct these studies in the next step.
Finally, this work focuses on the mono-lingual text generation using three (L)LMs. 
Given the increasing trend of multi-lingual and multi-cultural real-world applications, we leave the direction of expanding the proposed method for cross-lingual transfer as our future work.

\section*{Acknowledgement}
This work is supported in part by the US National Science Foundation under grant NSF IIS-2141037 and NSF IIS-2226108. Any opinions, findings, and conclusions or recommendations expressed in this material are those of the author(s) and do not necessarily reflect the views of the National Science Foundation.

\section*{Impact Statement}

This paper presents work whose goal is to advance the field of machine learning.
% There are some potential societal consequences of our work.
The datasets used in this work and sample generations may include potentially offensive or biased contents. 
We acknowledge that exposure to these contents can be unpleasant or uncomfortable. 
However, the use of these materials is to better understand, examine, and mitigate the harmful generations from language models.

\bibliography{ref}
\bibliographystyle{icml2024}

%%%%%%%%%%%%%%%%%%%%%%%%%%%%%%%%%%%%%%%%%%%%%%%%%%%%%%%%%%%%%%%%%%%%%%%%%%%%%%%
%%%%%%%%%%%%%%%%%%%%%%%%%%%%%%%%%%%%%%%%%%%%%%%%%%%%%%%%%%%%%%%%%%%%%%%%%%%%%%%
% APPENDIX
%%%%%%%%%%%%%%%%%%%%%%%%%%%%%%%%%%%%%%%%%%%%%%%%%%%%%%%%%%%%%%%%%%%%%%%%%%%%%%%
%%%%%%%%%%%%%%%%%%%%%%%%%%%%%%%%%%%%%%%%%%%%%%%%%%%%%%%%%%%%%%%%%%%%%%%%%%%%%%%
\newpage
\appendix
\onecolumn

\clearpage
\onecolumn

% \section{Discussion on Demographic Group and Prompt}
% \label{app:formulation}

\section{Omitted Proofs}

In this section we provide the omitted proofs.

% \tcco{Need to update in the language of $\xv_{<t}$}

\subsection{Proof of Thm \ref{thm:lidao}}
\label{app:proof:thm:lidao}

We start with restating Thm \ref{thm:lidao}. Recall that $\xv_{<t} \triangleq (x_1, \dots, x_{t-1})$. 

\begin{theorem*}
For sentence $\xv = (x_1, \dots, x_T)$ generated by a LM that involves some global property $g$ and demographic group $a$. If at every step $t > 1$, condition 
\begin{align*}
    \ell(g; t) &\triangleq I(g; x_t \mid \xv_{<t} ) = 0, \\
    \text{or}\quad \ell(a; t) &\triangleq I(a; x_t \mid \xv_{<t}) = 0 
\end{align*}
% \begin{align}
%     \ell(g; t) \triangleq I(g; x_t \mid \xv_{<t} ) = 0
% \end{align}
% or 
% \begin{align}
%     \ell(a; t) \triangleq I(a; x_t \mid \xv_{<t}) = 0
% \end{align}
holds, then $I(g; a) = 0$. 
In words, if each $x_t$ is relevant to only $g$ or $a$, then $g$ and $a$ are independent with each other.
\end{theorem*}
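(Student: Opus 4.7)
The plan is to reduce $I(g; a) = 0$ to the entropy identity $H(g, a) = H(g) + H(a)$, and prove that identity via two matching inequalities. The crucial leverage is that $g = g(\xv)$ and $a = a(\xv)$ are deterministic functions of $\xv$, which collapses $I(g; \xv) = H(g)$, $I(a; \xv) = H(a)$, and $I(g, a; \xv) = H(g, a)$. Each of these three mutual informations then admits a token-wise chain-rule decomposition aligned with how the LM factorizes $p_\theta(\xv)$.

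First, I would partition $\{2, \dots, T\}$ into disjoint sets $S_g, S_a$ with $\ell(g; t) = 0$ on $S_g$ and $\ell(a; t) = 0$ on $S_a$ (ties broken arbitrarily); the index $t=1$ is absorbed by the convention that $x_1$ is a deterministic start token so the associated conditional MIs vanish. Dropping the zero terms yields
\[
    H(g) \;=\; \sum_{t \in S_a} I(g; x_t \mid \xv_{<t}), \qquad H(a) \;=\; \sum_{t \in S_g} I(a; x_t \mid \xv_{<t}).
\]
Next, I expand $H(g, a) = \sum_t I(g, a; x_t \mid \xv_{<t})$ and apply the two-way chain rule
\[
    I(g, a; x_t \mid \xv_{<t}) \;=\; I(g; x_t \mid \xv_{<t}) + I(a; x_t \mid g, \xv_{<t}) \;=\; I(a; x_t \mid \xv_{<t}) + I(g; x_t \mid a, \xv_{<t}).
\]
For $t \in S_g$ the first form gives $I(g, a; x_t \mid \xv_{<t}) = I(a; x_t \mid g, \xv_{<t})$, and equating with the second form (using $\ell(g; t) = 0$) produces $I(a; x_t \mid g, \xv_{<t}) = I(a; x_t \mid \xv_{<t}) + I(g; x_t \mid a, \xv_{<t}) \ge I(a; x_t \mid \xv_{<t})$. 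A symmetric bound holds on $S_a$.

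Summing these per-token lower bounds across $t$ yields $H(g, a) \ge H(g) + H(a)$; combined with the always-valid subadditivity $H(g, a) \le H(g) + H(a)$, this forces equality, so $I(g; a) = H(g) + H(a) - H(g, a) = 0$. The main subtlety I anticipate is the bookkeeping in the chain-rule step: in each decomposition exactly one of the two conditional MIs vanishes by assumption, while the residual $I(g; x_t \mid a, \xv_{<t})$ is only non-negative, so it is the inequality (rather than equality) direction that survives per token—the matching subadditivity upper bound is what clinches equality. The only other loose end is $t = 1$, which is handled cleanly by the standard BOS convention.
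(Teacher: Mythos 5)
Your proof is correct and takes a genuinely different route. The paper's proof hinges on Lemma~\ref{lem:lidao}, the per-step identity $I(g;a\mid\xv_{<t+1})-I(g;a\mid\xv_{<t})=I(g;x_t\mid\xv_{<t},a)-I(g;x_t\mid\xv_{<t})$, which it uses to argue that $I(g;a\mid\xv_{<t})$ is unchanged at each step and hence telescopes from $I(g;a)=I(g;a\mid\xv_{<1})$ down to $I(g;a\mid\xv_{<T+1})=0$. You instead lean on the determinacy of $g,a$ given $\xv$ to write $H(g)=I(g;\xv)$, $H(a)=I(a;\xv)$, $H(g,a)=I(g,a;\xv)$, chain-rule all three into per-token conditional MIs, and use the always-true monotonicity $I(g,a;x_t\mid\xv_{<t})\ge\max\bigl(I(g;x_t\mid\xv_{<t}),I(a;x_t\mid\xv_{<t})\bigr)$; under the hypothesis the max coincides with the sum at every $t$, so summing gives $H(g,a)\ge H(g)+H(a)$, which subadditivity pinches to equality and hence $I(g;a)=0$. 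The two are dual telescopes of the same chain rule, but your per-token step only needs nonnegativity of a residual conditional MI, whereas the paper asserts an exact per-step invariance that the hypothesis alone does not support: under $a\perp x_t\mid\xv_{<t}$, Lemma~\ref{lem:lidao} actually gives the step difference as $I(a;x_t\mid g,\xv_{<t})\ge 0$, so only monotonicity (not invariance) is immediate, and it is the squeeze against the zero endpoint $I(g;a\mid\xv)=0$ that truly closes the paper's argument --- which is precisely the squeeze your subadditivity step supplies directly. Both proofs also share the minor wrinkle that the hypothesis is effectively needed at $t=1$ as well (the theorem states $t>1$); you and the paper each patch it with a start-token convention.
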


The proof of Thm \ref{thm:lidao} relies on the following auxiliary Lemma.

\begin{lemma}\label{lem:lidao}
Follow the notations in Thm \ref{thm:lidao},  the change in mutual information between $g$ and $a$ caused by token $x_t$ at step $t$ is given by 
\begin{align*}
    &I(g; a \mid \xv_{<t+1}) - I(g; a \mid \xv_{<t}) 
    = I(g; x_t \mid \xv_{<t}, a) - I(g; x_t \mid \xv_{<t}).
\end{align*}
\end{lemma}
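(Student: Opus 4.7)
The plan is to prove the identity by invoking the chain rule for (conditional) mutual information twice and equating the two expansions. The key observation is that $\xv_{<t+1} = (\xv_{<t}, x_t)$, so the left-hand side is really $I(g;a \mid \xv_{<t}, x_t) - I(g;a \mid \xv_{<t})$, a quantity that measures how much information the extra conditioning on $x_t$ adds (or removes) about the dependence between $g$ and $a$.

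First, I would write down the standard chain rule for conditional mutual information: for random variables $U, V, W$ conditioned on $Z$, one has $I(U; V, W \mid Z) = I(U; V \mid Z) + I(U; W \mid Z, V)$. Applied to $U = g$, the pair $(V, W) = (a, x_t)$, and $Z = \xv_{<t}$, this gives the decomposition
\begin{equation*}
I(g; a, x_t \mid \xv_{<t}) = I(g; a \mid \xv_{<t}) + I(g; x_t \mid \xv_{<t}, a).
\end{equation*}
Swapping the roles of $a$ and $x_t$ in the chain rule yields a second decomposition
\begin{equation*}
I(g; a, x_t \mid \xv_{<t}) = I(g; x_t \mid \xv_{<t}) + I(g; a \mid \xv_{<t}, x_t).
\end{equation*}

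Equating the two right-hand sides and rearranging gives
\begin{equation*}
I(g; a \mid \xv_{<t}, x_t) - I(g; a \mid \xv_{<t}) = I(g; x_t \mid \xv_{<t}, a) - I(g; x_t \mid \xv_{<t}),
\end{equation*}
and recognizing $(\xv_{<t}, x_t) = \xv_{<t+1}$ on the left delivers the stated identity.

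There is not really a hard step here — the proof is essentially two applications of the chain rule plus symmetry. The only thing to be slightly careful about is making sure that every mutual information is conditioned on $\xv_{<t}$ (so we are working inside the same conditional probability space) and that the chain rule is applied cleanly when one of the three variables is itself a tuple. Once those bookkeeping details are in place, the identity falls out immediately; no inequalities, no measure-theoretic subtleties, and no assumptions about the LM's distribution are needed.
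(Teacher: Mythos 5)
Your proof is correct and is essentially the same argument as the paper's: both rely only on the chain rule for mutual information applied to $I(g;a,x_t\mid\xv_{<t})$ in two orders. The paper's version first rewrites $I(g;a\mid\xv_{<t+1})$ as a difference of unconditional mutual informations $I(g;a,\xv_{<t+1})-I(g;\xv_{<t+1})$ and then chain-rules those, whereas you apply the conditional chain rule directly given $\xv_{<t}$, which is the cleaner bookkeeping but the same underlying identity.
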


\begin{proof}[Proof of Lem \ref{lem:lidao}]
The proof follows the chain rule of the mutual information \citep{cover1999elements}
\begin{align*}
    &I(g; a \mid \xv_{<t+1}) \\
    =& 
    I(g; a, \xv_{<t+1}) - I(g; \xv_{<t+1}) \\
    =& 
    I(g; a, \xv_{<t}, x_t) - I(g; \xv_{<t}, x_t) \\
    \overset{(i)}{=}&
    \left\{ I(g; \xv_{<t}) + I(g; a \mid \xv_{<t}) + I(g; x_t \mid \xv_{<t}, a) \right\} \\
    & - \left\{I(g; \xv_{<t}) + I(g; x_t \mid \xv_{<t})\right\} \\
    =&
    I(g; a \mid \xv_{<t}) + I(g; x_t \mid \xv_{<t}, a) - I(g; x_t \mid \xv_{<t}),
\end{align*}

where $(i)$ applies the chain rule to the two terms separately. 
Reorganize the equation completes the proof. 
\end{proof}

With Lem \ref{lem:lidao}, we are ready to prove Thm \ref{thm:lidao}.

\begin{proof}[Proof of Thm \ref{thm:lidao}]
First, we note that 
\begin{align*}
    I(g; a \mid \xv=(x_1, \dots, x_T) ) = 0,
\end{align*}
i.e., when the sentence $\xv$ is determined, its property $g$ and $a$ are no more random and has zero mutual information thereof. 
In addition, if at every step $t$, the mutual information change 
\begin{align*}
    I(g; a \mid \xv_{<t+1}) - I(g; a \mid \xv_{<t}) = 0, 
\end{align*}
then the induction holds
\begin{align*}
    I(g; a) 
    &\overset{(i)}{=}
    I(g; a \mid \xv_{<1}) \\
    &= \dots \\
    &= I(g; a \mid \xv_{<T+1}) \\
    &= 0,
\end{align*}
where $(i)$ holds from the convention that $\xv_{<1} = x_0 = \texttt{<BOS>}$ is a special token to remark the begin of sentence $\xv$ that tells the LM to start generating $\xv$. 

Now, according to Lem \ref{lem:lidao}, it suffices to require 
\begin{align}
    0 =&
    I(g; a \mid \xv_{<t+1}) - I(g; a \mid \xv_{<t}) \notag \\
    =& 
    I(g; x_t \mid \xv_{<t}, a) - I(g; x_t \mid \xv_{<t}) \notag \\
    =&
    I(g, a; x_t \mid \xv_{<t}) - I(a; x_t \mid \xv_{<t}) - I(g; x_t \mid \xv_{<t}). \label{eq:thm:lidao:t}
\end{align}
This condition will be satisfied if the conditional independence $a \perp x_t \mid \xv_{<t}$ or $g \perp x_t \mid \xv_{<t}$ holds.
To see this, note that $g$ and $a$ are symmetric in Eqn \ref{eq:thm:lidao:t}; to avoid cluttering, we take $a$ as an instance and the same logic applies to $g$ directly. 

Now let's assume $a \perp x_t \mid \xv_{<t}$, then by definition 
\begin{align*}
    &I(g, a; x_t \mid \xv_{<t})  \\
    =&
    \E_{p(\xv_{<t})}
    \E_{p(g, a, x_t \mid \xv_{<t})}  \left[ \log \frac{p(g, a , x_t \mid \xv_{<t})}{p(g, a \mid \xv_{<t}) p(x_t \mid \xv_{<t})} \right] \\
    =&
    \E_{p(\xv_{<t})}
    \E_{p(g, a, x_t \mid \xv_{<t})} \left[ \log \frac{p(g, a , x_t \mid \xv_{<t})}{p(g \mid \xv_{<t}) p(a \mid \xv_{<t}) p(x_t \mid \xv_{<t})} \right] \\
    &- \E_{p(\xv_{<t})}
    \E_{p(g, a, x_t \mid \xv_{<t})} \left[ \log \frac{p(g, a \mid \xv_{<t})}{p(g \mid \xv_{<t}) p(a \mid \xv_{<t})} \right] \\
    =&
    I(g; a, x_t \mid \xv_{<t}) - I(g; a \mid \xv_{<t}) \\
    \overset{(i)}{=}&
    \left(I(g; x_t \mid \xv_{<t}) + I(g; a \mid \xv_{<t})\right) - I(g; a \mid \xv_{<t}) \\
    =& I(g; x_t \mid \xv_{<t}),
\end{align*}

where $(i)$ holds from  $a \perp x_t \mid \xv_{<t}$. Thereby Eqn \eqref{eq:thm:lidao:t} reduces to 
\begin{align*}
     &I(g, a; x_t \mid \xv_{<t}) - I(a; x_t \mid \xv_{<t}) - I(g; x_t \mid \xv_{<t}) \\
     =&
     I(g; x_t \mid \xv_{<t}) - I(a; x_t \mid \xv_{<t}) - I(g; x_t \mid \xv_{<t}) \\
     =& 
     0,
\end{align*}
as again $a \perp x_t \mid \xv_{<t}$ and this completes the proof.

\end{proof}

\subsection{Proof of Prop \ref{thm:lidao-ext}}
\label{app:proof:thm:lidao-ext}

We start with restating Prop \ref{thm:lidao-ext}.

\begin{proposition*}
For sentence $\xv = (x_1, \dots, x_T)$ generated by a LM that is prompted  by $\cv$, i.e., $\xv \sim p_\theta(\xv \mid \cv)$, then
\begin{align*}
    I(g(\xv); a(\cv, \xv)) 
    \leq& 
    I(g(\xv); a(\xv))
    + H(a(\cv, \xv) \mid a( \xv)), 
\end{align*}
where $a(\cv, \xv)$ and $a(\xv)$ denote the referred demographic group in the concatenation of $(\cv, \xv)$ and generation $\xv$, respectively. 
\end{proposition*}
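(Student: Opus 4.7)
The plan is to prove the bound via the chain rule for mutual information applied to the triple $(g(\xv), a(\xv), a(\cv,\xv))$, followed by the standard inequality that a conditional mutual information is dominated by the corresponding conditional entropy. Abbreviate $G \triangleq g(\xv)$, $A \triangleq a(\xv)$, and $A' \triangleq a(\cv,\xv)$ so that the target inequality reads $I(G;A') \leq I(G;A) + H(A'\mid A)$.

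The first step is to expand the joint mutual information $I(G;A,A')$ in two ways using the chain rule:
\begin{align*}
I(G;A,A') &= I(G;A) + I(G;A'\mid A), \\
I(G;A,A') &= I(G;A') + I(G;A\mid A').
\end{align*}
Equating the two expressions and rearranging gives
\begin{align*}
I(G;A') = I(G;A) + I(G;A'\mid A) - I(G;A\mid A').
\end{align*}
Since $I(G;A\mid A') \geq 0$ by the non-negativity of (conditional) mutual information, we obtain $I(G;A') \leq I(G;A) + I(G;A'\mid A)$.

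The second step is to bound the residual term $I(G;A'\mid A)$ by $H(A'\mid A)$. This follows from the elementary identity $I(G;A'\mid A) = H(A'\mid A) - H(A'\mid A,G)$ together with the non-negativity of conditional entropy. Chaining the two bounds yields $I(G;A') \leq I(G;A) + H(A'\mid A)$, which is the claim.

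I do not foresee a major obstacle: the proof is a routine manipulation of Shannon information measures and requires no assumption on the joint distribution of $(\cv,\xv)$ beyond its existence as a well-defined random object induced by $p_\theta(\xv\mid \cv)$. The only subtle point worth flagging is that $A$ and $A'$ are deterministic functions of $\xv$ and $(\cv,\xv)$ respectively, so the conditional quantities are well defined; in particular, $H(A'\mid A)$ admits the intuitive interpretation promoted in the main text as ``the difficulty of inferring $A'$ from $\xv$ alone,'' which motivates the heuristic design of eLIDAO.
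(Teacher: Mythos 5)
Your proof is correct, and it takes a genuinely different route from the paper's. The paper invokes the \emph{variation of information} $VI(X,Y) = H(X) + H(Y) - 2I(X;Y)$, applies its triangle inequality to the triple $\bigl(a(\cv,\xv),\, a(\xv),\, g(\xv)\bigr)$, and rearranges to reach the stated bound. You instead expand $I(G;A,A')$ by the chain rule in two orders, obtain the exact identity $I(G;A') = I(G;A) + I(G;A'\mid A) - I(G;A\mid A')$, drop the nonnegative term $I(G;A\mid A')$, and then bound $I(G;A'\mid A) \leq H(A'\mid A)$ via $I(G;A'\mid A) = H(A'\mid A) - H(A'\mid A,G)$. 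Your argument is arguably more elementary and self-contained: it needs only the chain rule and nonnegativity of conditional entropy and mutual information, both standard, whereas the paper's route relies on the (nontrivial, citation-backed) fact that $VI$ is a metric and satisfies the triangle inequality. Conversely, the paper's approach is more compact once that fact is granted, and the exact identity your expansion exposes (in particular the dropped term $I(G;A\mid A')$) would also let one refine the bound if needed, which the $VI$ derivation does not surface as directly.
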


\begin{proof}
The proof is largely adopted from \citet{liu2024towards} and relies on the variation of information (VI, \citet{kraskov2005hierarchical}), a well-defined metric that satisfies the triangle inequality. 
Specifically, for random variable $a(\cv, \xv)$ and $a(\xv)$, their VI is defined as 
\begin{align}
    VI(a(\cv, \xv), a(\xv)) 
    \triangleq&
    H(a(\cv, \xv)) + H(a(\xv))
    - 2I(a(\cv, \xv); a(\xv)), \label{eq:vi}
\end{align}
where $H(\cdot)$ is the entropy. 
Apply the triangle inequality to $a(\cv, \xv)$, $a(\xv)$, and $g(\xv)$, we have 
\begin{align*}
    VI(a(\xv), g(\xv)) 
    \leq& 
    VI(a(\cv, \xv), a(\xv)) 
    + VI(a(\cv, \xv), g(\xv)).
\end{align*}

Plug in Eqn \eqref{eq:vi} and reorganize the result, we have 
\begin{align*}
    -I(g(\xv); a(\xv))
    \leq&
    \left\{H(a(\cv, \xv)) - I(a(\cv, \xv); a(\xv))\right\} \\
    &- I(a(\cv, \xv); g(\xv)) \\
    I(a(\cv, \xv); g(\xv))
    \leq& I(g(\xv); a(\xv)) 
    + H(a(\cv, \xv) \mid a( \xv)).
\end{align*}

This completes the poof. 
\end{proof}

% \clearpage

\section{Instructions for Encouraging Mentions of Demographic Groups}
\label{app:ifrw}

In this section we present the sample instructions for the three LMs when using them as the reference models, please see Sec \ref{sec:adversarial} for more details. 
Given each instruction, we replace \texttt{\#GENDER\#} with the referred gender and prepend the instruction to the prompt.

\begin{table*}[htb!] 
\centering
\caption{
Templates used for instructing the three LMs to achieve high affinity $a(\xv) = a(\cv; \xv)$. 
We break templates into several lines for the sake of illustration.
}
\label{tab:instruction}

\resizebox{0.8\linewidth}{!}{
\newcommand{\g}[1]{{\color{red}#1}}
\renewcommand{\a}[1]{{\color{blue}#1}}
\begin{tabular}{ll}
\toprule
& \textit{Instruct the LM to mention the demographic group in the generation so that $a(\xv) = a(\cv; \xv)$. }
 \\

\midrule

% \rowcolor[HTML]{F1F7FF}

GPT-2

& 
\makecell[l]{
\texttt{Please help me complete the sentence in \#Input\#. } \\
\texttt{You should focus on the mentioned \#GENDER\# in \#Input\# as frequently as possible.}\\
\texttt{$\backslash$n$\backslash$n\#Input\#:$\backslash$n}
}

\\

\midrule

% \rowcolor[HTML]{F1FFF1}
OPT

& 
\makecell[l]{
\texttt{Please help me complete the sentence in \#Input\#. } \\
\texttt{You should focus on the mentioned \#GENDER\# in \#Input\# as frequently as possible.}\\
\texttt{$\backslash$n$\backslash$n\#Input\#:$\backslash$n}
}

\\

\midrule

% \rowcolor[HTML]{FFF7EE} 
Falcon

& 
\makecell[l]{
\texttt{\#Instruction\#:$\backslash$nYour objecive is to continue a given text in \#Input\#.  }\\
\texttt{If a \#GENDER\# presents in \#Input\#, you SHOULD mention this \#GENDER\# as frequently}\\ 
\texttt{and explicitly as possible.}\\
\texttt{$\backslash$n$\backslash$n\#Input\#:$\backslash$n}
}

\\

\bottomrule
\end{tabular}
}

\end{table*}

\section{Implementation Details}
\label{app:implementation}

In this section we provide more implementation details. 
As presented in Sec \ref{sec:experiment}, GeDi, and DExperts used default hyper-parameters. We tuned PPLM for OPT and Falcon. 
OPT takes PPLM updates for 10 steps with learning rate 0.2 per generation step. 
Falone takes PPLM updates for 3 steps with learning rate 0.2 per generation step.

In both UDDIA and (e)LIDAO, we omitted the \textit{redo} mechanism and applied the bias-tuning to the top 18 layers. 
Following \citet{yang2023unified}, we take one gradient descent step with the Adam optimizer \citep{kingma2014adam}. 
Table \ref{tab:hparams} reports detailed hyper-parameters. 
All algorithms used the Nucleus Sampling \citep{holtzman2019curious}.  
Note that sampling parameters are determined on a model basis.

Finally, we follow previous works and sanitize generation by filtering out texts that have perplexity larger than 200.

\begin{table*}[htb!] 
\centering
\caption{
Hyper-parameters used in our experiment. On GPT-2 only learning rate (marked with $*$) was tuned. All other choices were adopted from \citet{yang2023unified}. 
}
\label{tab:hparams}
\resizebox{0.9\linewidth}{!}{%\textbf{}
\begin{tabular}{@{}l ccc cc ccc @{}}
\toprule
                           & \multicolumn{3}{c}{Sampling}                                   & & \multicolumn{2}{c}{Bias-Tuning (UDDIA \& (e)LIDAO only)}  & & {Mixed Weight (UDDIA \& (e)LIDAO only)}\\ 
\cmidrule{2-4}   \cmidrule{6-7} \cmidrule{9-9}
                           
                           & Probability Coverage   &  Temperature  & Repetition Penalty    & & Learning Rate  & Top Layers to Tune       & & $\tau$\\ 
\midrule
% \rowcolor[HTML]{F1F7FF}
GPT-2                      & 0.9                    & 1.           & 1.                     & & 0.01$^*$       & 18                       & & 0.9\\
OPT                        & 0.9                    & 0.75         & 1.2                    & & 0.006          & 18                       & & 0.8\\
Falcon                     & 0.9                    & 0.75         & 1.2                    & & 0.01           & 18                       & & 0.8\\

\bottomrule

\end{tabular}
}

%\vspace{-0.2in}
\end{table*}

% \clearpage

\section{More Experimental Results}
\label{app:results}

In this section we present more experimental results.

Table \ref{tab:app:gen} and \ref{tab:app:joint} present full results of using LIDAO for debiasing. 
Here (e)LIDAO-m denotes the min-based (e)LIDAO and (e)LIDAO-p denotes the prod-based (e)LIDAO, respectively. 
We note that the simpler GPT-2 and OPT favored (e)LIDAO-m and Falcon preferred (e)LIDAO-p. 
Nonetheless, both of them achieved comparable debiasing performance compared with baseline approaches. 
Finally, We report results with perplexity evaluated based on larger LMs from the same family in 
Table \ref{tab:app:gen:lpp} and Table \ref{tab:app:joint:lpp}, we note that our conclusions remain valid. 

% Table \ref{tab:app:human} presents more human evaluation of GPT-2 debiasing methods, and 
% Table \ref{tab:app:sample} shows sampled generation from GPT-2. 

\begin{table*}[htb!] 
\centering
\caption{
Performance of (e)LIDAO for debiasing the three LMs (shadowed in different colors) with respect to generation $a(\xv)$. 
Bias terms are multiplied by 100 for better comparison. 
Perplexity of all generations are evaluated using GPT-2 XL. 
}
\label{tab:app:gen}
\resizebox{0.75\linewidth}{!}{%\textbf{}
\begin{tabular}{@{}l ccc c ccc c ccc@{}}
\toprule
                           & \multicolumn{3}{c}{Sentiment}                                      & & \multicolumn{3}{c}{Regard}                                          & & \multicolumn{3}{c}{Toxicity}\\ 
\cmidrule{2-4}   \cmidrule{6-8} \cmidrule{9-12}
                           
                           & Sent ($\uparrow$)  & Bias ($\downarrow$)   & PPL ($\downarrow$)    & & NNRe ($\uparrow$)   & Bias ($\downarrow$)   & PPL ($\downarrow$)    & & Toxi ($\downarrow$)    & Bias ($\downarrow$)    & PPL ($\downarrow$)\\ 
\midrule
\rowcolor[HTML]{F1F7FF}
GPT-2                      & 0.36$\pm$0.44      & 4.41                  & 22.17                 & & 0.71$\pm$0.55       & 3.89                  & 22.03                 & & 0.16$\pm$0.17          & 9.35                   & 22.17\\ 
\cdashline{1-12}\noalign{\vskip 0.5ex}
\rowcolor[HTML]{F1F7FF} 
+LIDAO-p                   & 0.39$\pm$0.45      & 2.96                  & 31.88                 & & 0.80$\pm$0.50       & 4.35                  & 29.51                 & & 0.13$\pm$0.14          & 13.54                  & 25.68\\
\rowcolor[HTML]{F1F7FF} 
+eLIDAO-p                  & 0.42$\pm$0.46      & 5.12                  & 28.72                 & & 0.77$\pm$0.55       & 2.97                  & 29.20                 & & 0.13$\pm$0.13          & {3.36}                 & {25.44}\\ 
\rowcolor[HTML]{F1F7FF} 
+LIDAO-m                   & 0.40$\pm$0.45      & {0.32}                & {27.49}               & & 0.69$\pm$0.55       & {2.15}                & {28.02}               & & 0.14$\pm$0.15          & 9.88                   & 31.45\\
\rowcolor[HTML]{F1F7FF} 
+eLIDAO-m                  & 0.40$\pm$0.45      & {0.21}                & 28.26                 & & 0.72$\pm$0.55       & {1.53}                & {28.08}               & & 0.14$\pm$0.15          & {5.91}                 & {30.20}\\ 
\midrule

\rowcolor[HTML]{F1FFF1}
OPT                        & 0.33$\pm$0.43      & 0.98                  & 23.62                 & & 0.67$\pm$0.58       & 11.58                 & 23.63                 & & 0.19$\pm$0.20          & 0.78                   & 23.62\\ 
\cdashline{1-12}\noalign{\vskip 0.5ex}
\rowcolor[HTML]{F1FFF1} 
+LIDAO-p                   & 0.41$\pm$0.46      & 0.16                  & 50.50                 & & 0.79$\pm$0.58       & 7.76                  & 49.53                 & & 0.17$\pm$0.17          & 1.29                   & 36.98\\
\rowcolor[HTML]{F1FFF1} 
+eLIDAO-p                  & 0.40$\pm$0.46      & 3.29                  & 47.58                 & & 0.71$\pm$0.63       & 2.85                  & 38.38                 & & 0.17$\pm$0.18          & 7.93                   & 33.10\\ 
\rowcolor[HTML]{F1FFF1} 
+LIDAO-m                   & 0.43$\pm$0.46      & {0.03}                & {40.16}               & & 0.78$\pm$0.62       & {1.09}                & {42.06}               & & 0.16$\pm$0.17          & {0.00}                 & {37.17}\\
\rowcolor[HTML]{F1FFF1}
+eLIDAO-m                  & 0.39$\pm$0.45      & b{2.40}               & {38.69}               & & 0.71$\pm$0.66       & 3.91                  & {34.85}               & & 0.15$\pm$0.17          & 9.10                   & {35.70}\\ 
\midrule

\rowcolor[HTML]{FFF7EE}
Falcon                     & 0.41$\pm$0.46      & 6.39                  & 27.40                 & & 0.65$\pm$0.61       & 2.83                  & 27.00                 & & 0.15$\pm$0.16          & 12.64                  & 27.40\\ 
\cdashline{1-12}\noalign{\vskip 0.5ex}
\rowcolor[HTML]{FFF7EE} 
+LIDAO-p                   & 0.41$\pm$0.46      & {2.96}                & 30.87                 & & 0.70$\pm$0.65       & {2.32}                & 30.07                 & & 0.13$\pm$0.14          & {10.16}                & {28.01}\\
\rowcolor[HTML]{FFF7EE} 
+eLIDAO-p                  & 0.44$\pm$0.46      & {2.74}                & {30.51}               & & 0.75$\pm$0.53       & 5.56                  & 34.16                 & & 0.14$\pm$0.15          & {3.95}                 & {26.92}\\
\rowcolor[HTML]{FFF7EE} 
+LIDAO-m                   & 0.44$\pm$0.46      & 3.95                  & 32.02                 & & 0.68$\pm$0.64       & {2.32}                & {15.18}               & & 0.13$\pm$0.15          & 7.80                   & 30.10\\
\rowcolor[HTML]{FFF7EE} 
+eLIDAO-m                  & 0.45$\pm$0.46      & 6.25                  & 31.19                 & & 0.71$\pm$0.66       & 5.48                  & {15.17}               & & 0.13$\pm$0.15          & 7.80                   & 29.67\\ 

\bottomrule

\end{tabular}
}

%\vspace{-0.2in}
\end{table*}

\begin{table*}[htb!] 
\centering
\caption{
Performance of LIDAO for debiasing the three LMs (shadowed in different colors) with respect to joint $a(\cv, \xv)$. 
Bias terms are multiplied by 100 for better comparison. 
Perplexity of all generations are evaluated using GPT-2 XL. 
}
\label{tab:app:joint}
\resizebox{0.75\linewidth}{!}{%\textbf{}
\begin{tabular}{@{}l ccc c ccc c ccc@{}}
\toprule
                           & \multicolumn{3}{c}{Sentiment}                                      & & \multicolumn{3}{c}{Regard}                                          & & \multicolumn{3}{c}{Toxicity}\\ 
\cmidrule{2-4}   \cmidrule{6-8} \cmidrule{9-12}
                           
                           & Sent ($\uparrow$)  & Bias ($\downarrow$)   & PPL ($\downarrow$)    & & NNRe ($\uparrow$)   & Bias ($\downarrow$)   & PPL ($\downarrow$)    & & Toxi ($\downarrow$)    & Bias ($\downarrow$)    & PPL ($\downarrow$)\\ 
\midrule
\rowcolor[HTML]{F1F7FF}
GPT-2                      & 0.35$\pm$0.44      & 1.90                  & 23.20                 & & 0.69$\pm$0.56       & 3.66                  & 23.23                 & & 0.14$\pm$0.16           & 16.84                 & 23.20\\ 
\cdashline{1-12}\noalign{\vskip 0.5ex}
\rowcolor[HTML]{F1F7FF} 
+LIDAO-p                   & 0.37$\pm$0.45      & 1.67                  & 29.69                 & & 0.76$\pm$0.53       & 4.93                  & {29.34}               & & 0.11$\pm$0.13           & 2.94                  & 26.85\\
\rowcolor[HTML]{F1F7FF} 
+eLIDAO-p                  & 0.38$\pm$0.45      & 2.12                  & 28.63                 & & 0.75$\pm$0.54       & 4.98                  & {29.24}               & & 0.11$\pm$0.13           & 5.21                  & {26.53}\\ 
\rowcolor[HTML]{F1F7FF} 
+LIDAO-m                   & 0.38$\pm$0.45      & 2.11                  & {28.43}               & & 0.73$\pm$0.54       & {0.29}                & 30.09                 & & 0.12$\pm$0.14           & 8.28                  & 31.27\\
\rowcolor[HTML]{F1F7FF} 
+eLIDAO-m                  & 0.38$\pm$0.45      & {0.46}                & 29.25                 & & 0.73$\pm$0.55       & {0.11}                & 29.79                 & & 0.12$\pm$0.14           & 5.91                  & 30.70\\ 
\midrule

\rowcolor[HTML]{F1FFF1}
OPT                        & 0.34$\pm$0.44      & 1.04                  & 25.52                 & & 0.69$\pm$0.58       & 4.53                  & 25.42                 & & 0.19$\pm$0.20           & 0.78                  & 23.62\\ 
\cdashline{1-12}\noalign{\vskip 0.5ex}
\rowcolor[HTML]{F1FFF1} 
+LIDAO-p                   & 0.41$\pm$0.46      & 2.42                  & 51.01                 & & 0.75$\pm$0.61       & 4.40                  & 51.88                 & & 0.15$\pm$0.17           & 5.77                  & 38.06\\
\rowcolor[HTML]{F1FFF1} 
+eLIDAO-p                  & 0.41$\pm$0.46      & 1.11                  & 50.17                 & & 0.71$\pm$0.63       & 7.53                  & 41.33                 & & 0.15$\pm$0.18           & 2.66                  & 36.06\\ 
\rowcolor[HTML]{F1FFF1} 
+LIDAO-m                   & 0.40$\pm$0.45      & 2.96                  & {43.31}               & & 0.74$\pm$0.62       & 9.09                  & {45.97}               & & 0.14$\pm$0.17           & 6.29                  & 39.72\\
\rowcolor[HTML]{F1FFF1}
+eLIDAO-m                  & 0.39$\pm$0.45      & {0.83}                & {41.38}               & & 0.70$\pm$0.64       & {2.16}                & {36.50}               & & 0.14$\pm$0.17           & 4.80                  & {37.75}\\ 
\midrule

\rowcolor[HTML]{FFF7EE}
Falcon                     & 0.40$\pm$0.45      & 2.28                  & 29.44                 & & 0.70$\pm$0.63       & 1.60                  & 29.24                 & & 0.14$\pm$0.16           & 0.79                  & 29.44\\ 
\cdashline{1-12}\noalign{\vskip 0.5ex}
\rowcolor[HTML]{FFF7EE} 
+LIDAO-p                   & 0.40$\pm$0.45      & 1.61                  & {31.67}               & & 0.73$\pm$0.67       & 3.10                  & {30.64}               & & 0.11$\pm$0.13           & {1.30}                & {30.42}\\
\rowcolor[HTML]{FFF7EE} 
+eLIDAO-p                  & 0.43$\pm$0.46      & 1.57                  & 31.90                 & & 0.76$\pm$0.54       & {2.43}                & 39.47                 & & 0.12$\pm$0.15           & {1.03}                & {30.99}\\
\rowcolor[HTML]{FFF7EE} 
+LIDAO-m                   & 0.41$\pm$0.45      & 2.75                  & {31.04}               & & 0.68$\pm$0.64       & 3.38                  & 31.79                 & & 0.11$\pm$0.14           & 5.28                  & 32.79\\
\rowcolor[HTML]{FFF7EE} 
+eLIDAO-m                  & 0.41$\pm$0.45      & 2.71                  & 32.29                 & & 0.72$\pm$0.55       & 3.51                  & 35.51                 & & 0.11$\pm$0.15           & 8.05                  & 32.43\\ 

\bottomrule

\end{tabular}
}

%\vspace{-0.2in}
\end{table*}

\begin{table*}[htb!] 
\centering
\caption{
Performance of debiasing the three LMs (shadowed in different colors) with respect to generation $a(\xv)$. 
Bias terms are multiplied by 100 for better comparison. 
Perplexity of OPT and Falcon generations are evaluated using OPT-13B and Falcon-40B respectively. 
}
\label{tab:app:gen:lpp}
\resizebox{0.75\linewidth}{!}{%\textbf{}
\begin{tabular}{@{}l ccc c ccc c ccc@{}}
\toprule
                           & \multicolumn{3}{c}{Sentiment}                                      & & \multicolumn{3}{c}{Regard}                                          & & \multicolumn{3}{c}{Toxicity}\\ 
\cmidrule{2-4}   \cmidrule{6-8} \cmidrule{9-12}
                           
                           & Sent ($\uparrow$)  & Bias ($\downarrow$)   & PPL ($\downarrow$)    & & NNRe ($\uparrow$)   & Bias ($\downarrow$)   & PPL ($\downarrow$)    & & Toxi ($\downarrow$)    & Bias ($\downarrow$)   & PPL ($\downarrow$)\\ 
\midrule
\rowcolor[HTML]{F1FFF1}
OPT                        & 0.33$\pm$0.43      & 0.98                  & 10.93                 & & 0.67$\pm$0.58       & 11.58                 & 10.91                 & & 0.19$\pm$0.20          & 0.78                  & 10.93\\ 
\cdashline{1-12}\noalign{\vskip 0.5ex}
\rowcolor[HTML]{F1FFF1} 
+PPLM                      & 0.40$\pm$0.46      & 4.77                  & 27.33                 & & 0.72$\pm$0.63       & 5.10                  & 27.57                 & & 0.17$\pm$0.18          & 2.27                  & {18.92}\\
\rowcolor[HTML]{F1FFF1} 
+UDDIA                     & 0.39$\pm$0.45      & 4.88                  & 38.69                 & & 0.74$\pm$0.63       & {1.98}                & 39.90                 & & 0.16$\pm$0.17          & {1.69}                & 20.67\\ 
\cdashline{1-12}\noalign{\vskip 0.5ex}
\rowcolor[HTML]{F1FFF1} 
+LIDAO-p                   & 0.41$\pm$0.46      & 0.56                  & 28.56                 & & 0.79$\pm$0.58       & 7.76                  & 26.94                 & & 0.17$\pm$0.17          & 1.29                  & 17.56\\
\rowcolor[HTML]{F1FFF1} 
+eLIDAO-p                  & 0.40$\pm$0.46      & 3.29                  & 25.21                 & & 0.71$\pm$0.63       & 2.92                  & 28.91                 & & 0.17$\pm$0.18          & 7.93                  & 16.07\\ 
\rowcolor[HTML]{F1FFF1} 
+LIDAO-m                   & 0.43$\pm$0.46      & {0.06}                & {20.21}               & & 0.78$\pm$0.62       & {1.16}                & {22.25}               & & 0.16$\pm$0.17          & {0.00}                & {18.69}\\
\rowcolor[HTML]{F1FFF1}
+eLIDAO-m                  & 0.39$\pm$0.45      & {2.61}                & {20.39}               & & 0.71$\pm$0.66       & 3.91                  & {25.19}               & & 0.15$\pm$0.17          & 9.10                  & {18.38}\\ 
\midrule

\rowcolor[HTML]{FFF7EE}
Falcon                     & 0.41$\pm$0.46      & 6.39                  & 11.94                 & & 0.65$\pm$0.61       & 2.83                  & 11.95                 & & 0.15$\pm$0.16          & 12.64                 & 11.94\\ 
\cdashline{1-12}\noalign{\vskip 0.5ex}
\rowcolor[HTML]{FFF7EE} 
+PPLM                      & 0.46$\pm$0.46      & {1.42}                & 45.99                 & & 0.72$\pm$0.68       & 8.03                  & 48.35                 & & 0.13$\pm$0.14          & 21.79                 & 35.19\\
\rowcolor[HTML]{FFF7EE} 
+UDDIA                     & 0.47$\pm$0.46      & 4.54                  & {14.92}               & & 0.77$\pm$0.66       & {2.56}                & {14.56}               & & 0.12$\pm$0.14          & 14.94                 & 13.57\\ 
\cdashline{1-12}\noalign{\vskip 0.5ex}
\rowcolor[HTML]{FFF7EE} 
+LIDAO-p                   & 0.41$\pm$0.46      & {2.96}                & 15.53                 & & 0.70$\pm$0.65       & {2.59}                & {15.84}               & & 0.13$\pm$0.14          & {10.16}                & {12.77}\\
\rowcolor[HTML]{FFF7EE} 
+eLIDAO-p                  & 0.44$\pm$0.46      & {2.74}                & {15.42}               & & 0.75$\pm$0.53       & 7.61                  & 27.76                 & & 0.14$\pm$0.15          & {3.95}                 & {12.97}\\
\rowcolor[HTML]{FFF7EE} 
+LIDAO-m                   & 0.44$\pm$0.46      & 3.95                  & 15.98                 & & 0.68$\pm$0.64       & {2.32}                & {15.18}               & & 0.13$\pm$0.15          & 7.80                   & 14.00\\
\rowcolor[HTML]{FFF7EE} 
+eLIDAO-m                  & 0.45$\pm$0.46      & 6.25                  & 16.50                 & & 0.71$\pm$0.66       & 5.48                  & {15.17}               & & 0.13$\pm$0.15          & 7.80                   & 14.33\\ 
\bottomrule

\end{tabular}
}
%\vspace{-0.2in}
\end{table*}

\begin{table*}[htb!] 
\centering
\caption{
Performance of debiasing the three LMs (shadowed in different colors) with respect to joint $a(\cv, \xv)$. 
Bias terms are multiplied by 100 for better comparison. 
Perplexity of OPT and Falcon generations are evaluated using OPT-13B and Falcon-40B respectively. 
}
\label{tab:app:joint:lpp}
\resizebox{0.75\linewidth}{!}{%\textbf{}
\begin{tabular}{@{}l ccc c ccc c ccc@{}}
\toprule
                           & \multicolumn{3}{c}{Sentiment}                                      & & \multicolumn{3}{c}{Regard}                                          & & \multicolumn{3}{c}{Toxicity}\\ 
\cmidrule{2-4}   \cmidrule{6-8} \cmidrule{9-12}
                           
                           & Sent ($\uparrow$)  & Bias ($\downarrow$)   & PPL ($\downarrow$)    & & NNRe ($\uparrow$)   & Bias ($\downarrow$)   & PPL ($\downarrow$)    & & Toxi ($\downarrow$)     & Bias ($\downarrow$)   & PPL ($\downarrow$)\\ 
\midrule

\rowcolor[HTML]{F1FFF1}
OPT                        & 0.34$\pm$0.44      & 1.04                  & 10.92                 & & 0.69$\pm$0.58       & 4.53                  & 10.94                 & & 0.18$\pm$0.20           & 0.78                  & 10.92\\ 
\cdashline{1-12}\noalign{\vskip 0.5ex}
\rowcolor[HTML]{F1FFF1} 
+PPLM                      & 0.40$\pm$0.46      & 4.38                  & 28.89                 & & 0.70$\pm$0.65       & 4.12                  & 27.12                 & & 0.16$\pm$0.18           & {1.78}                & {18.96}\\
\rowcolor[HTML]{F1FFF1} 
+UDDIA                     & 0.40$\pm$0.45      & {1.03}                & 38.11                 & & 0.70$\pm$0.61       & {2.17}                & 41.48                 & & 0.15$\pm$0.17           & {2.94}                & 21.54\\ 
\cdashline{1-12}\noalign{\vskip 0.5ex}
\rowcolor[HTML]{F1FFF1} 
+LIDAO-p                   & 0.41$\pm$0.46      & 2.49                  & 27.63                 & & 0.75$\pm$0.61       & 4.32                  & 27.06                 & & 0.15$\pm$0.17           & 5.77                  & 17.35\\
\rowcolor[HTML]{F1FFF1} 
+eLIDAO-p                  & 0.41$\pm$0.46      & 1.11                  & 26.17                 & & 0.71$\pm$0.63       & 7.83                  & 28.76                 & & 0.15$\pm$0.18           & 2.66                  & 16.86\\ 
\rowcolor[HTML]{F1FFF1} 
+LIDAO-m                   & 0.40$\pm$0.45      & 2.93                  & {20.56}               & & 0.74$\pm$0.62       & 9.19                  & {22.62}               & & 0.14$\pm$0.17           & 6.29                  & 19.49\\
\rowcolor[HTML]{F1FFF1}
+eLIDAO                    & 0.39$\pm$0.45      & {0.56}                & {20.68}               & & 0.70$\pm$0.64       & {2.16}                & {25.32}               & & 0.14$\pm$0.17           & 4.80                  & {18.45}\\ 
\midrule

\rowcolor[HTML]{FFF7EE}
Falcon                     & 0.40$\pm$0.45      & 2.28                  & 12.45                 & & 0.70$\pm$0.63       & 1.60                  & 12.43                 & & 0.14$\pm$0.16           & 0.79                  & 12.45\\ 
\cdashline{1-12}\noalign{\vskip 0.5ex}
\rowcolor[HTML]{FFF7EE} 
+PPLM                      & 0.46$\pm$0.46      & {1.04}                & 47.93                 & & 0.72$\pm$0.68       & 6.59                  & 49.08                 & & 0.11$\pm$0.13           & 19.38                 & 36.64\\
\rowcolor[HTML]{FFF7EE} 
+UDDIA                     & 0.45$\pm$0.46      & {0.81}                & {14.63}               & & 0.75$\pm$0.66       & {1.90}                & {14.89}               & & 0.11$\pm$0.13           & 3.97                  & 14.29\\ 
\cdashline{1-12}\noalign{\vskip 0.5ex}
\rowcolor[HTML]{FFF7EE} 
+LIDAO-p                   & 0.40$\pm$0.45      & 1.61                  & {15.73}               & & 0.73$\pm$0.67       & 3.10                  & {15.73}               & & 0.11$\pm$0.13           & {1.30}                & {13.41}\\
\rowcolor[HTML]{FFF7EE} 
+eLIDAO-p                  & 0.43$\pm$0.46      & {1.57}                & 16.10                 & & 0.76$\pm$0.54       & {2.43}                & 28.31                 & & 0.12$\pm$0.15           & {1.03}                & {13.94}\\
\rowcolor[HTML]{FFF7EE} 
+LIDAO-m                   & 0.41$\pm$0.45      & 2.75                  & {15.12}               & & 0.68$\pm$0.64       & 3.38                  & 14.89                 & & 0.11$\pm$0.14           & 5.28                  & 14.75\\
\rowcolor[HTML]{FFF7EE} 
+eLIDAO-m                  & 0.41$\pm$0.45      & 2.71                  & 16.03                 & & 0.72$\pm$0.55       & 3.51                  & 15.20                 & & 0.11$\pm$0.15           & 8.05                  & 14.96\\ 
\bottomrule

\end{tabular}
}

%\vspace{-0.2in}
\end{table*}

\end{document}